\colorlet{shadecolor}{yellow}
\newcommand{\hatv}[1]{\overset{\wedge}{\mathstrut#1}}
\newcommand{\Whatv}{\hatv{W}}
\newcommand{\Hhatv}{\hatv{H}}
\newcommand\norm[1]{\left\lVert#1\right\rVert}
\newcommand*{\argmin}{\operatornamewithlimits{argmin}\limits}
\newcommand\numberthis{\addtocounter{equation}{1}\tag{\theequation}}
\def\vect{\mathbf}
\def\matr{\mathbf}
\newcommand*{\rom}[1]{\expandafter\@slowromancap\romannumeral #1@}
\DeclareMathOperator{\tr}{tr}
\DeclarePairedDelimiter\set\{\}
\newtheorem{definition}{\textbf{Definition}}
\newtheorem{theorem}{Theorem}[section]
\begin{document}

%
% paper title
% Titles are generally capitalized except for words such as a, an, and, as,
% at, but, by, for, in, nor, of, on, or, the, to and up, which are usually
% not capitalized unless they are the first or last word of the title.
% Linebreaks \\ can be used within to get better formatting as desired.
% Do not put math or special symbols in the title.

% \IEEEpubid{\begin{minipage}[t]{\textwidth}\ \\[10pt]
%         \centering\normalsize{This manuscript is currently under review in IEEE Transactions on Knowledge and Data Engineering (TKDE)}
% \end{minipage}} 

\title{Privacy-preserving Non-negative Matrix Factorization with Outliers}
%
%
% author names and IEEE memberships
% note positions of commas and nonbreaking spaces ( ~ ) LaTeX will not break
% a structure at a ~ so this keeps an author's name from being broken across
% two lines.
% use \thanks{} to gain access to the first footnote area
% a separate \thanks must be used for each paragraph as LaTeX2e's \thanks
% was not built to handle multiple paragraphs
%
%
%\IEEEcompsocitemizethanks is a special \thanks that produces the bulleted
% lists the Computer Society journals use for "first footnote" author
% affiliations. Use \IEEEcompsocthanksitem which works much like \item
% for each affiliation group. When not in compsoc mode,
% \IEEEcompsocitemizethanks becomes like \thanks and
% \IEEEcompsocthanksitem becomes a line break with idention. This
% facilitates dual compilation, although admittedly the differences in the
% desired content of \author between the different types of papers makes a
% one-size-fits-all approach a daunting prospect. For instance, compsoc 
% journal papers have the author affiliations above the "Manuscript
% received ..."  text while in non-compsoc journals this is reversed. Sigh.

\author{Swapnil~Saha,~and~Hafiz~Imtiaz~\IEEEmembership{}% <-this % stops a space
\IEEEcompsocitemizethanks{
\IEEEcompsocthanksitem Swapnil Saha is a graduate student with the Department
of Electrical and Electronic Engineering, Bangladesh University of Engineering and Technology, Dhaka, Bangladesh.\protect\\
% note need leading \protect in front of \\ to get a newline within \thanks as
% \\ is fragile and will error, could use \hfil\break instead.
E-mail: 1606095@eee.buet.ac.bd
\IEEEcompsocthanksitem Hafiz Imtiaz is an Associate Professor with the Department
of Electrical and Electronic Engineering, Bangladesh University of Engineering and Technology, Dhaka, Bangladesh.\protect\\
E-mail: hafizimtiaz@eee.buet.ac.bd.}% <-this % stops an unwanted space
\thanks{Manuscript received XXXX XX, 2022; revised XXXX XX, XXXX.}}

% note the % following the last \IEEEmembership and also \thanks - 
% these prevent an unwanted space from occurring between the last author name
% and the end of the author line. i.e., if you had this:
% 
% \author{....lastname \thanks{...} \thanks{...} }
%                     ^------------^------------^----Do not want these spaces!
%
% a space would be appended to the last name and could cause every name on that
% line to be shifted left slightly. This is one of those "LaTeX things". For
% instance, "\textbf{A} \textbf{B}" will typeset as "A B" not "AB". To get
% "AB" then you have to do: "\textbf{A}\textbf{B}"
% \thanks is no different in this regard, so shield the last } of each \thanks
% that ends a line with a % and do not let a space in before the next \thanks.
% Spaces after \IEEEmembership other than the last one are OK (and needed) as
% you are supposed to have spaces between the names. For what it is worth,
% this is a minor point as most people would not even notice if the said evil
% space somehow managed to creep in.

% The paper headers
\markboth{IEEE Transactions on Knowledge and Data Engineering,~Vol.~XX, No.~XX, XXXX~XXXX}%
{Saha and Imtiaz: Privacy-preserving Non-negative Matrix Factorization with Outliers}
% The only time the second header will appear is for the odd numbered pages
% after the title page when using the twoside option.
% 
% *** Note that you probably will NOT want to include the author's ***
% *** name in the headers of peer review papers.                   ***
% You can use \ifCLASSOPTIONpeerreview for conditional compilation here if
% you desire.

% The publisher's ID mark at the bottom of the page is less important with
% Computer Society journal papers as those publications place the marks
% outside of the main text columns and, therefore, unlike regular IEEE
% journals, the available text space is not reduced by their presence.
% If you want to put a publisher's ID mark on the page you can do it like
% this:
%\IEEEpubid{0000--0000/00\$00.00~\copyright~2015 IEEE}
% or like this to get the Computer Society new two part style.
%\IEEEpubid{\makebox[\columnwidth]{\hfill 0000--0000/00/\$00.00~\copyright~2015 IEEE}%
%\hspace{\columnsep}\makebox[\columnwidth]{Published by the IEEE Computer Society\hfill}}
% Remember, if you use this you must call \IEEEpubidadjcol in the second
% column for its text to clear the IEEEpubid mark (Computer Society jorunal
% papers don't need this extra clearance.)

% use for special paper notices
%\IEEEspecialpapernotice{(Invited Paper)}

% for Computer Society papers, we must declare the abstract and index terms
% PRIOR to the title within the \IEEEtitleabstractindextext IEEEtran
% command as these need to go into the title area created by \maketitle.
% As a general rule, do not put math, special symbols or citations
% in the abstract or keywords.
\IEEEtitleabstractindextext{%
\begin{abstract}

Non-negative matrix factorization is a popular unsupervised machine learning algorithm for extracting meaningful features from data which are inherently non-negative. However, such data sets may often contain privacy-sensitive user data, and therefore, we may need to take necessary steps to ensure the privacy of the users while analyzing the data. In this work, we focus on developing a Non-negative matrix factorization algorithm in the privacy-preserving framework. More specifically, we propose a novel privacy-preserving algorithm for non-negative matrix factorisation capable of operating on private data, while achieving results comparable to those of the non-private algorithm. We design the framework such that one has the control to select the degree of privacy grantee based on the utility gap. We show our proposed framework’s performance in six real data sets. The experimental results show that our proposed method can achieve very close performance with the non-private algorithm under some parameter regime, while ensuring strict privacy.

\end{abstract}

% Note that keywords are not normally used for peerreview papers.
\begin{IEEEkeywords}
Differential Privacy, Non-negative Matrix Factorization, R\'enyi Differential Privacy, Topic Modelling, Facial Feature.
\end{IEEEkeywords}}

% make the title area
\maketitle

% To allow for easy dual compilation without having to reenter the
% abstract/keywords data, the \IEEEtitleabstractindextext text will
% not be used in maketitle, but will appear (i.e., to be "transported")
% here as \IEEEdisplaynontitleabstractindextext when the compsoc 
% or transmag modes are not selected <OR> if conference mode is selected 
% - because all conference papers position the abstract like regular
% papers do.
\IEEEdisplaynontitleabstractindextext
% \IEEEdisplaynontitleabstractindextext has no effect when using
% compsoc or transmag under a non-conference mode.

% For peer review papers, you can put extra information on the cover
% page as needed:
% \ifCLASSOPTIONpeerreview
% \begin{center} \bfseries EDICS Category: 3-BBND \end{center}
% \fi
%
% For peerreview papers, this IEEEtran command inserts a page break and
% creates the second title. It will be ignored for other modes.
\IEEEpeerreviewmaketitle

\IEEEraisesectionheading{\section{Introduction}\label{sec:introduction}}
% Computer Society journal (but not conference!) papers do something unusual
% with the very first section heading (almost always called "Introduction").
% They place it ABOVE the main text! IEEEtran.cls does not automatically do
% this for you, but you can achieve this effect with the provided
% \IEEEraisesectionheading{} command. Note the need to keep any \label that
% is to refer to the section immediately after \section in the above as
% \IEEEraisesectionheading puts \section within a raised box.

% The very first letter is a 2 line initial drop letter followed
% by the rest of the first word in caps (small caps for compsoc).
% 
% form to use if the first word consists of a single letter:
% \IEEEPARstart{A}{demo} file is ....
% 
% form to use if you need the single drop letter followed by
% normal text (unknown if ever used by the IEEE):
% \IEEEPARstart{A}{}demo file is ....
% 
% Some journals put the first two words in caps:
% \IEEEPARstart{T}{his demo} file is ....
% 
% Here we have the typical use of a "T" for an initial drop letter
% and "HIS" in caps to complete the first word.
\IEEEPARstart{N}{on-negative} matrix factorisation is an unsupervised machine learning technique for discovering the part-based representation of intrinsically non-negative data \cite{hoyer2004non}. For a $D \times N$ data matrix $\matr{V}$, where $N$ is the number of the data samples, and $D$ is the data dimension, the entries satisfy $v_{ij} \geqslant 0\ \forall i\in \{1, 2, \ldots, D\}$ and $\forall j\in \{1, 2, \ldots, N\}$. The NMF objective is to decompose the data matrix as the following:
\begin{equation}\label{VWH}
    \matr{V} \approx \matr{WH},
\end{equation}
where $\matr{W} \in \mathcal{R}^{D\times K}$ is the basis matrix, $\matr{H} \in \mathcal{R}^{K\times N}$ is the coefficient matrix, and $K$ is the reduced latent dimension. Here, each entry of $\matr{W}$ satisfies $w_{ik} \geqslant 0$, and each entry of $\matr{H}$ satisfies $h_{kj}\geqslant 0,\forall k\in\{1, 2, \ldots, K\}$. In short, NMF performs dimension reduction by mapping the ambient data dimension $D$ into reduced latent dimension $K$ for $N$ data samples. The $j$-th column of $\matr{V}$ can be written as:
\begin{equation}
    \vect{v}_j \approx \matr{W} \vect{h}_j,
\end{equation}
where $\vect{h}_j$ is the $j$-th column of $\matr{H}$. Essentially, the $j$-th column of $\matr{V}$ is represented as a linear combination of all columns of $\matr{W}$ with the coefficients being the corresponding entries from $\vect{h}_j$. Therefore, the dictionary matrix $\matr{W}$ can be interpreted as storing the ``parts'' of the data matrix $\matr{V}$. This part-based  decomposition, and considering only the non-negative values, make NMF popular in many practical applications including, but not limited to, dimension reduction, topic modeling in text mining, representation learning, extracting local facial features from human faces, unsupervised image segmentation, speech denoising, and community detection in social networks.

\noindent\textbf{NMF and Privacy. }\textcolor{blue}{In the modern era of big data, many services are customized for the users to provide better suggestions and experiences. Such customization is often done via some machine learning algorithm, which are trained or fine-tuned on users’ sensitive data. As shown in both theoretical and applied works, the users may be rightfully concerned regarding their privacy being compromised by the machine learning algorithm’s outputs. For example, the seminal work of Homer et al.~\cite{homer2008resolving} showed that the presence of an individual in a genome dataset can be identified from simple summary statistics about the dataset. In the machine learning setting, typically the model parameters (as a matrix $\matr{W}$ or vector $\vect{w}$) are learned by training on the users’ data. To that end, ``Membership Inference Attacks” \cite{fredrikson2015model} are discussed in detail by Hu et al.~\cite{hu2022membership}, and Shokri et al.~\cite{shokri2017membership}. They showed that given the learned parameters $\matr{W}$, an adversary can identify users in the training set. Basically, the model's trained weights can be used to extract sensitive information. The weights' tendency to memorize the training examples is used to regenerate the training example. Several other works~\cite{narayanan2006break,le2013differentially,sweeney2015only} also showed how personal data leakage occurred from modern machine learning (ML) and signal processing tasks. Additionally, it has been shown that simple anonymization of data does not provide any privacy in the presence of auxiliary information available from other/public sources~\cite{sweeney2015only}. One of the examples is the Netflix prize, where an anonymized data set was released in 2007. This data set contained anonymized movie ratings from Netflix subscribers. Nevertheless, researchers found a way to crack the privacy of the dataset and successfully recovered $99\%$ of the removed personal data \cite{narayanan2006break} by using the publicly available IMDb datasets. In addition, privacy can leak through gradient sharing \cite{zhu2019deep} as well. In the distributed machine learning system, several nodes exchange gradient values. The authors in~\cite{zhu2019deep} showed how one can extract private training information from gradient sharing. These discoveries led to widespread concern over using private data for public machine learning algorithms.} Evidently, personal information leakage is the main hindrance to collecting and analyzing sensor data for training machine learning and signal processing algorithms. It is, therefore, necessary to develop a framework where one can share private data without disclosing their participation or identity. Differential privacy (DP) is a rigorous mathematical framework that can protect against information leakage \cite{dwork2014algorithmic}. The definition of differential privacy is motivated by the cryptographic work, which has gained significant attention in machine learning and data mining communities \cite{dwork2006calibrating}. Though the target may be the same: give privacy to data, how DP guarantees privacy is much different from the cryptography~\cite{vaidya2006privacy} and information theory~\cite{zhou2013achieving}. In a differentially private mechanism, we can learn useful information from the population while ensuring the privacy of the population members. It is done statistically by introducing randomness in the algorithm. This randomness provides confusion to attack the sensitive data of the participants. However, one may need to compromise the algorithm’s utility to ensure privacy. That is, one needs to quantitatively choose the optimum privacy budget considering the required privacy-utility trade-off.

\noindent\textbf{Related Works. }The non-negative matrix factorization is attained in the literature by minimizing the following objective function: $\min_{\matr{W} \in \mathcal{C}, h_{kj}\geqslant 0, \forall k,j} \norm{\matr{V} - \matr{WH}}_F^2$,
% \begin{equation} \label{org_optimized_func}
%     ,
% \end{equation}
where $\mathcal{C} \subseteq \mathcal{R}^{D\times K}$ is the constraint set for $\matr{W}$. Several algorithms have been proposed to obtain the optimal point of this objective function, such as the multiplicate updates~\cite{NIPS2000_f9d11525}, alternating direction method of multipliers~\cite{xu2012alternating}, block principal pivoting~\cite{kim2008toward}, active set method~\cite{kim2008nonnegative}, and projected gradient descent~\cite{lin2007projected}. Most of these algorithms are based on alternatively updating $\matr{W}$ and $\matr{H}$. This makes the optimization problem divided into two sub-problems: each of which can be optimized using the standard optimization techniques, such as the projected gradient or the interior point method. A detailed survey of these optimization techniques can be found in~\cite{wang2012nonnegative, kim2014algorithms}. Our work is based on the robust NMF algorithm using projected gradient descent~\cite{zhao2016online}. This modified robust algorithm improves two extreme scenarios: (i) when the data matrix $\matr{V}$ has a large number of data samples, and (ii) the existence of outliers in data samples. The second scenario is common in many practical cases, such as the salt and pepper noise in image data, and impulse noise in time series data. If these outliers and noises are not handled properly during matrix decomposition, the basis (or dictionary) matrix $\matr{W}$ may not be well optimized and fail to learn the part-based representation. We discuss the implementation of the algorithm in  detail in the Section \ref{nmf_problem_formulation}.
 
Extensive works and surveys exist in the literature on differential privacy. In particular, Dwork and Smith's survey \cite{dwork2010differential} contains the earlier theoretical work. We refer the reader to \cite{abadi2016deep,chaudhuri2008privacy,chaudhuri2011differentially,song2013stochastic,ji2014differential,li2018differentially,bassily2014private,ligett2017accuracy,wang2017differentially} for the most relevant works in differentially private machine learning, deep learning, optimization problems, gradient descent, and empirical risk minimization. Adding randomness in the gradient calculation is one of the most common approaches for implementing differential privacy~\cite{song2013stochastic,bassily2014private}. Other common approaches are employing the output~\cite{chaudhuri2011differentially} and objective perturbations~\cite{nozari2016differentially}, the exponential mechanism~\cite{mcsherry2007mechanism}, the Laplace mechanism~\cite{dwork2006calibrating} and Smooth sensitivity~\cite{nissim2007smooth}. \textcolor{blue}{Last but not the least, the work of Alsulaimawi~\cite{alsulaimawi2020non} has introduced a privacy filter with federated learning for NMF factorization. Fu et al.~\cite{fu2019cloud} implemented  privacy-preserving NMF for dimension reduction using Paillier Cryptosystem, Nikolaenko et al.~\cite{nikolaenko2013privacy} did privacy preserving matrix factorization for recommendation systems using partially homomorphic encryption with Yao’s garbled circuits. Privacy Preserving Data Mining (PPDM) was implemented in \cite{afrin2019privacy} using combined NMF and Singular Value Decomposition (SVD) methods. The authors in~\cite{ran2022differentially} proposed differentially private NMF for the recommender system, and we showed the comparison analyses in Section \ref{Comparison}. However, to the best of our knowledge, no work has introduced differential privacy in the universal NMF decomposition and calculated privacy composition for multi-stage implementation to account for the best privacy budget.}

\noindent\textbf{Our Contributions. }\textcolor{blue}{In this work, we intend to perform NMF decomposition on inherently non-negative and privacy-sensitive data. As the data matrix $\matr{V}$ contains user-specific information, an adversary can extract sensitive information regarding the users from the estimated dictionary matrix $\matr{W}$. However, the estimated $\matr{W}$ should encompass the fundamental basis of the population. Note that, the data may have some outliers, which may cause one to capture unrepresentative dictionary matrix if the outliers are not handled properly. To that end, we propose a privacy-preserving non-negative matrix factorization algorithm considering outliers. We propose to compute the dictionary matrix $\matr{W}$ satisfying a mathematically rigorous privacy guarantee, differential privacy, such that the computed $\matr{W}$ reflects very little about any particular user's data, is relatively unaffected by the presence of outliers, and closely approximates the true dictionary matrix. Our major contribution is summarized below: 
\begin{itemize}
    \item We develop a novel privacy-preserving algorithm for non-negative matrix factorization capable of operating on privacy-sensitive data, while closely approximating the results of the non-private algorithm.
    \item We consider the effect of outliers by specifically modeling them, such that the presence of outliers have very little effect on the estimated dictionary matrix $\matr{W}$.
    \item We analyze our algorithm with R\'enyi Differential Privacy \cite{mironov2017renyi} to obtain a much better accounting of the overall privacy loss, compared to the conventional strong composition rule~\cite{dwork2014algorithmic}. 
    \item We performed extensive experimentation on real datasets to show the effectiveness of our proposed algorithm. We compare the results with those of the non-private algorithm and observe that our proposed algorithm can offer close approximation to the non-private results for some parameter choices.
    \item We present the result plots in a way that the user can choose between the overall privacy budget and the required ``closeness'' to non-private results (utility-gap).
\end{itemize}
}

\section{Problem formulation}
\subsection{Notations}

\begin{table}[t]
    \centering
    \begin{tabular}{c c}
         \hline
         Notation & Meaning \\ 
         \hline
         $\matr{V}$ & Data Matrix \\
         $\matr{W}$ & Dictionary / Basis Matrix \\
         $\matr{W}_{\textrm{private}}$ & Differentially private $\matr{W}$  \\
         $\matr{H}$ &  Coefficient Matrix\\
         $\matr{R}$ &  Outlier Matrix \\
         $D$ & Ambient dimension of Data matrix \\
         $K$ & Latent dimension   \\
         $N$ & Number of users in Data matrix \\
         $\vect{v}_n$ & $n$-th user vector in Data matrix \\
         $(\epsilon,\delta)$ & Privacy parameters \\
         $\Updelta$ & Sensitivity \\
         $\triangledown f(W)$ & Gradient function to update Dictionary matrix \\
         $\overline{\triangledown f(W)}$ & Noisy gradient to learn $\matr{W}_{\textrm{private}}$ \\
         $\matr{A},\matr{B}$ & Statistics matrices of $\triangledown f(W)$ \\
         $\overline{\matr{A}},\overline{\matr{B}}$ & Noise perturbed  statistics matrices of $\matr{A},\matr{B}$\\

         \hline
               
    \end{tabular}
    \caption{Notations}
    \label{tab:notation}
\end{table}

For clarity and readability, we denote vector, matrix and scalar with different notation. Bold lower case letter $(\textbf{v})$, bold capital letter $(\textbf{V})$ and unbolded letter $(M)$ are used respectively for vector, matrix and scalar. To indicate the iteration instant, we use subscript $t$. For example, $\matr{W}_t$ denotes the dictionary matrix after $t$ iterations. The superscript $'+'$ indicates a single update. The $n$-th column of matrix $\textbf{V}$ is denoted as $\textbf{v}_n$. We denote the indices with lowercase unbolded letter. For example, $v_{ij}$ indicates the entry of the $i$-th row and $j$-th column of the matrix $\matr{V}$. Inequality $\vect{x} \geqslant 0$ or $\matr{X} \geqslant 0$ apply entry-wise. For element wise matrix multiplication, we use the notation $\odot$. We denote the $\mathcal{L}_2$ norm (Euclidean norm) with $\norm{.}_2$, the $\mathcal{L}_{1,1}$ norm with $\norm{.}_{1,1}$, and the Frobenius norm with $\norm{.}_F$. $\mathds{R} \text{ , and } \mathds{R}_{+}$ denote the set of real numbers, and set of positive real numbers, respectably. $\mathcal{P}_+$ denotes the Euclidean projector, which projects value onto the non-negative orthant. Lastly, the probability distribution function of zero mean unit variance Gaussian random variable is given as follows: $f(x)=\frac{1}{\sqrt{2\pi}}\exp{\frac{-x^2}{2}}$. We summarized the frequently used notations at Tab. \ref{tab:notation}.
 
\subsection{Definitions and Preliminaries} \label{definations}
In differential privacy, we define $\mathcal{D}$ as the domain of the databases consisting of $N$ records. We also define neighboring data sets $D,\ D^\prime$, which differ by only one record.

\begin{definition}
  (($\epsilon,\delta$)-DP \cite{dwork2006calibrating}) An algorithm $f$ : $\mathcal{D} \mapsto \mathcal{T}$ provides ($\epsilon,\delta$)- differential privacy (($\epsilon,\delta$)-DP) if $P(f(D)\in \mathcal{S}) \leqslant \delta + e^\epsilon P(f(D^\prime)\in \mathcal{S} )$ for all measurable $\mathcal{S} \subseteq \mathcal{T} $ and for all neighbouring data sets $D,D^\prime \in \mathcal{D}$.
\end{definition}
Here, $\epsilon,\delta \geqslant 0$ are the  privacy parameters and determine how the algorithm will perform in providing utility and preserving privacy. The $\epsilon$ indicates how much the algorithm’s output deviates in probability, when we replace one single person’s data with another. The parameter $\delta$ indicates the probability that the privacy mechanism fails to give the guarantee of $\epsilon$. Intuitively, higher privacy demand makes poor utility. A lower value of  $\epsilon$ and $\delta$ guarantee more privacy but lower utility. There are several mechanisms to implement differential privacy: Gaussian \cite{dwork2006calibrating}, Laplace mechanism \cite{dwork2014algorithmic}, random sampling, and exponential mechanism \cite{mcsherry2007mechanism} are well-known. Among the additive noise mechanisms, the noise’s standard deviation is scaled by the privacy budget and the sensitivity of the function.

\begin{definition}
($\mathcal{L}_2$ sensitivity \cite{dwork2006calibrating}) The $\mathcal{L}_2$- sensitivity of vector valued function $f(D)$ is $\Updelta := \max_{D,D^\prime} \norm{f(D)-f(D^\prime)}_{2}$, where $D$ and $D^\prime$  are the neighbouring dataset.
\end{definition} 
The $\mathcal{L}_2$ sensitivity of a function gives the upper bound of how much randomness we need to perturb to the function’s output, if we want to keep the guarantee of differential privacy. It captures the maximum output change by a single user in the worst-case scenario.
\begin{definition}
(Gaussian Mechanism \cite{dwork2014algorithmic}) Let $f : \mathcal{D} \mapsto \mathcal{R}^d $ be an arbitrary function with $\mathcal{L}_{2}$  sensitivity $\Updelta$. The Gaussian mechanism with parameter $\tau$ adds noise from $\mathcal{N}(0,\tau^2)$ to each of the $d$ entries of the output and satisfies $(\epsilon,\delta)$ differential privacy for $\epsilon \in (0,1)$ and $\delta \in (0,1)$, if $\tau \geq \frac{\Updelta}{\epsilon} \sqrt{2\log\frac{1.25}{\delta}}$.
\end{definition} 
Here, $(\epsilon,\delta)$- differential privacy is guaranteed by adding noise drawn form $\mathcal{N}(0,\tau^2)$ distribution. Note that, there is infinite combinations of $(\epsilon,\delta)$ for a given $\tau^2$.
\begin{definition}
(R\'enyi Differential Privacy \cite{mironov2017renyi}) A randomized algorithm $f$ : $\mathcal{D} \mapsto \mathcal{T}$ is $(\alpha,\epsilon_{r})$-R\'enyi differentially private if, for any adjacent $D,D^\prime \in \mathcal{D}$, the following holds: $D_{\alpha}(\mathcal{A}(D)\ || \mathcal{A}(D^\prime)) \leq \epsilon_{r}$. Here, $D_{\alpha}(P(x)||Q(x))=\frac{1}{\alpha-1}\log\mathds{E}_{x \sim Q} \Big(\frac{P(x)}{Q(x)}\Big)^{\alpha}$ and $P(x)$ and $Q(x)$ are probability density functions defined on $\mathcal{T}$.
\end{definition} 
We use R\'enyi Differential Privacy for calculating the total privacy budget spent in multi-stage differentially private mechanisms. RDP provides a much simpler rule for calculating overall privacy risk $\epsilon$ that is shown to be tight \cite{mironov2017renyi}.

\newtheorem{prop}{Proposition}
\begin{prop}\label{rdp_dp}
(From RDP to DP \cite{mironov2017renyi}). If $f$ is an $(\alpha,\epsilon_r)$-RDP mechanism, it also satisfies $(\epsilon_r+\frac{\log 1/\delta}{\alpha-1},\delta)$-differential privacy for any $0<\delta<1$.
\end{prop}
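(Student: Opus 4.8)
The plan is to reproduce the standard change-of-measure argument behind the RDP-to-DP conversion. Fix a pair of adjacent datasets $D, D' \in \mathcal{D}$, write $P$ and $Q$ for the densities of $\mathcal{A}(D)$ and $\mathcal{A}(D')$ on $\mathcal{T}$, and recall that $\alpha > 1$. We may assume the hypothesis $D_\alpha(P\|Q) \le \epsilon_r$ gives a finite divergence, which forces absolute continuity of $P$ with respect to $Q$ (otherwise, after restricting to the common support, there is nothing to prove). Set the target level $\epsilon' := \epsilon_r + \frac{\log(1/\delta)}{\alpha-1}$. The goal is to establish $P(\mathcal{S}) \le e^{\epsilon'} Q(\mathcal{S}) + \delta$ for every measurable $\mathcal{S} \subseteq \mathcal{T}$, which is exactly the $(\epsilon',\delta)$-DP condition.

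First I would isolate the region where the privacy loss is too large: let $\mathcal{B} := \{ x \in \mathcal{T} : P(x) > e^{\epsilon'} Q(x) \}$. For any measurable $\mathcal{S}$, splitting on $\mathcal{B}$ and its complement yields
\[
  P(\mathcal{S}) \;=\; P(\mathcal{S}\cap\mathcal{B}) + P(\mathcal{S}\setminus\mathcal{B}) \;\le\; P(\mathcal{B}) + e^{\epsilon'} Q(\mathcal{S}\setminus\mathcal{B}) \;\le\; P(\mathcal{B}) + e^{\epsilon'} Q(\mathcal{S}),
\]
where the middle step uses $P(x) \le e^{\epsilon'} Q(x)$ on $\mathcal{S}\setminus\mathcal{B}$. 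It therefore remains to bound the tail mass $P(\mathcal{B}) \le \delta$.

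The core --- and most delicate --- step is to estimate $P(\mathcal{B})$ using Markov's inequality together with a change of measure. Raising the defining inequality of $\mathcal{B}$ to the power $\alpha-1>0$, we have $\mathcal{B} = \{ (P(x)/Q(x))^{\alpha-1} > e^{(\alpha-1)\epsilon'} \}$, so by Markov's inequality under $P$,
\[
  P(\mathcal{B}) \;\le\; e^{-(\alpha-1)\epsilon'}\,\mathds{E}_{x\sim P}\!\left[\left(\tfrac{P(x)}{Q(x)}\right)^{\alpha-1}\right].
\]
The key identity is $\mathds{E}_{x\sim P}\!\left[(P(x)/Q(x))^{\alpha-1}\right] = \mathds{E}_{x\sim Q}\!\left[(P(x)/Q(x))^{\alpha}\right]$, obtained by writing one expectation as an integral and absorbing a factor of $P/Q$ into the base measure; by the definition of R\'enyi divergence this equals $e^{(\alpha-1)D_\alpha(P\|Q)}$, which by the RDP hypothesis is at most $e^{(\alpha-1)\epsilon_r}$. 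Combining, $P(\mathcal{B}) \le e^{(\alpha-1)(\epsilon_r-\epsilon')} = e^{-\log(1/\delta)} = \delta$, completing the proof. The one bookkeeping point to handle carefully is the set $\{Q(x)=0\}$ appearing in these ratios; since $P \ll Q$ it has $P$-measure zero and can be discarded, and the discrete case goes through verbatim with sums in place of integrals. I do not anticipate any genuine obstacle beyond stating this change-of-measure identity cleanly and keeping track of the direction of each inequality.
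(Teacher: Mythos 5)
Your proof is correct. Note that the paper does not actually prove this proposition---it is quoted from Mironov's RDP paper as a black-box conversion rule---so there is no in-paper argument to compare against; the relevant comparison is with Mironov's original proof. Your tail-splitting argument (decompose $P(\mathcal{S})$ over the bad set $\mathcal{B}=\{x:P(x)>e^{\epsilon'}Q(x)\}$, then bound $P(\mathcal{B})$ by Markov's inequality applied to $(P/Q)^{\alpha-1}$ under $P$ together with the change-of-measure identity $\mathds{E}_{P}[(P/Q)^{\alpha-1}]=\mathds{E}_{Q}[(P/Q)^{\alpha}]=e^{(\alpha-1)D_\alpha(P\,\|\,Q)}$) is one of the two standard routes, and it lands exactly on the stated constant: solving $e^{(\alpha-1)(\epsilon_r-\epsilon')}=\delta$ gives $\epsilon'=\epsilon_r+\frac{\log(1/\delta)}{\alpha-1}$. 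Mironov's original proof instead applies H\"older's inequality to obtain the probability-preservation bound $P(\mathcal{S})\le e^{(1-1/\alpha)\epsilon_r}\,Q(\mathcal{S})^{1-1/\alpha}$ and then case-splits on whether the right-hand side exceeds $\delta$; both arguments yield the same $(\epsilon',\delta)$ pair, and yours is arguably the more transparent one because it exhibits the failure event explicitly. Your handling of the measure-theoretic edge cases is also adequate: finiteness of $D_\alpha$ for $\alpha>1$ forces $P\ll Q$, so the set $\{Q=0\}$ is $P$-null and can be discarded before forming the ratios. The only point worth stating explicitly is that the whole argument presupposes $\alpha>1$ (the conversion formula itself divides by $\alpha-1$), which is consistent with how the paper invokes RDP.
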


\begin{prop}\label{comp_rdp}
(Composition of RDP \cite{mironov2017renyi}). Let $f_1:\mathcal{D}\rightarrow \mathcal{R}_1$ be $(\alpha,\epsilon_1)$ and $f_2:{\mathcal{R}_1} \times \mathcal{D} \rightarrow \mathcal{R}_2$ be $(\alpha,\epsilon_2)$-RDP,then the mechanism defined as $(X_1,X_2)$, where $X_1 \sim f_1(\mathcal{D})$ and $X_2 \sim f_2(X_1,\mathcal{D})$ satisfies $(\alpha,\epsilon_1+\epsilon_2)$-RDP.
\end{prop}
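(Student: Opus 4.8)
The plan is to argue directly from the definition of R\'enyi divergence, exploiting the chain-rule factorization of the joint density of $(X_1,X_2)$. Fix an arbitrary pair of adjacent datasets $D,D'\in\mathcal{D}$, and let $P$ denote the distribution of $(X_1,X_2)$ when the underlying dataset is $D$ and $Q$ the distribution when it is $D'$. By construction of the composed mechanism these factor as $P(x_1,x_2)=p_1(x_1)\,p_2(x_2\mid x_1)$ and $Q(x_1,x_2)=q_1(x_1)\,q_2(x_2\mid x_1)$, where $p_1,q_1$ are the densities of $f_1(D),f_1(D')$ and $p_2(\cdot\mid x_1),q_2(\cdot\mid x_1)$ are the densities of $f_2(x_1,D),f_2(x_1,D')$. (If $P$ fails to be absolutely continuous with respect to $Q$ both sides of the target inequality are $+\infty$ and there is nothing to prove, so we may assume the densities and ratios below are well defined.)

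First I would expand $\exp\big((\alpha-1)D_{\alpha}(P\,\|\,Q)\big)=\mathds{E}_{(x_1,x_2)\sim Q}\big[(P/Q)^{\alpha}\big]$ and substitute the factorization, so the likelihood ratio splits as $(p_1/q_1)^{\alpha}(p_2/q_2)^{\alpha}$. Then, since all integrands are non-negative, Tonelli's theorem lets me evaluate the inner integral over $x_2$ first: for each fixed $x_1$, $\int q_2(x_2\mid x_1)\,(p_2(x_2\mid x_1)/q_2(x_2\mid x_1))^{\alpha}\,dx_2$ equals $\exp\big((\alpha-1)D_{\alpha}(f_2(x_1,D)\,\|\,f_2(x_1,D'))\big)$. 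Because $f_2$ is $(\alpha,\epsilon_2)$-RDP and $D,D'$ are adjacent, this is at most $\exp((\alpha-1)\epsilon_2)$ \emph{uniformly in} $x_1$. This is the step where the hypothesis on $f_2$ is used, and the crucial point is that $f_2$'s guarantee must hold for every fixed value of the auxiliary input $x_1$; this is exactly what the statement assumes.

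Pulling that uniform bound out of the remaining integral over $x_1$ leaves $\exp((\alpha-1)\epsilon_2)\cdot\int q_1(x_1)(p_1(x_1)/q_1(x_1))^{\alpha}\,dx_1=\exp((\alpha-1)\epsilon_2)\cdot\exp\big((\alpha-1)D_{\alpha}(f_1(D)\,\|\,f_1(D'))\big)$, and the $(\alpha,\epsilon_1)$-RDP property of $f_1$ bounds the second factor by $\exp((\alpha-1)\epsilon_1)$. Hence $\exp\big((\alpha-1)D_{\alpha}(P\,\|\,Q)\big)\le\exp\big((\alpha-1)(\epsilon_1+\epsilon_2)\big)$; since $\alpha>1$, taking logarithms and dividing by $\alpha-1$ gives $D_{\alpha}(P\,\|\,Q)\le\epsilon_1+\epsilon_2$. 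As $D,D'$ were an arbitrary adjacent pair, the composed mechanism is $(\alpha,\epsilon_1+\epsilon_2)$-RDP.

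I expect the obstacles here to be bookkeeping rather than conceptual: justifying the interchange of the order of integration (immediate from Tonelli, since everything is non-negative) and, if one wants full generality, handling the boundary case $\alpha=1$ separately, where $D_{\alpha}$ degenerates to the KL divergence and the same chain-rule decomposition yields exact additivity by a limiting argument. The only genuinely load-bearing idea is recognizing that the joint-density factorization converts the $\alpha$-th moment of the joint likelihood ratio into a product, after which the two RDP hypotheses apply one after the other.
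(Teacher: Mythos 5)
Your proof is correct: the chain-rule factorization of the joint density, the Tonelli interchange, and the uniform-in-$x_1$ bound on the inner integral via the $(\alpha,\epsilon_2)$-RDP guarantee of $f_2$ (followed by the $(\alpha,\epsilon_1)$ bound for $f_1$) is exactly the standard argument. The paper itself offers no proof of this proposition—it is imported by citation from Mironov's RDP paper—and your argument coincides with the proof given in that cited source, so there is nothing further to reconcile.
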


\begin{prop}\label{rdp_gaussian}
(RDP and Gaussian Mechanism\cite{mironov2017renyi}). If $f$ has $\mathcal{L}_2$ sensitivity 1, then the Gaussian mechanism $\mathcal{G}_{\sigma}f(\mathcal{D})=f(\mathcal{D})+\mathcal{E}$ where $\mathcal{E} \sim \mathcal{N}(0,\sigma^2)$ satisfies $(\alpha,\frac{\alpha}{2\sigma^2})$-RDP. Additionally, a composition of $K$ such Gaussian mechanisms satisfies $(\alpha,\frac{\alpha K}{2 \sigma^2})$- RDP.
\end{prop}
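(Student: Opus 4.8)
The plan is to work straight from the definition of the R\'enyi divergence and reduce everything to a one-dimensional Gaussian moment computation. Fix a pair of neighbouring datasets $D, D' \in \mathcal{D}$ and let $P$, $Q$ denote the densities of $\mathcal{G}_{\sigma}f(D) = f(D) + \mathcal{E}$ and $\mathcal{G}_{\sigma}f(D') = f(D') + \mathcal{E}$; these are the spherical Gaussians $\mathcal{N}(f(D), \sigma^2 \matr{I})$ and $\mathcal{N}(f(D'), \sigma^2 \matr{I})$. Since a Gaussian density is translation invariant and $D_{\alpha}$ is an $f$-divergence (hence unchanged under a common bijective transformation of the output space), I would first shift coordinates so that $Q = \mathcal{N}(\vect{0}, \sigma^2 \matr{I})$ and $P = \mathcal{N}(\vect{\mu}, \sigma^2 \matr{I})$ with $\vect{\mu} := f(D) - f(D')$, and then rotate so that $\vect{\mu}$ lies along the first axis. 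In the ratio $P(x)/Q(x)$ every coordinate orthogonal to $\vect{\mu}$ contributes the same factor to numerator and denominator and cancels, so the problem collapses to the scalar case with mean gap $\norm{\vect{\mu}}_2$.

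Next I would evaluate $\mathds{E}_{x \sim Q}\big(P(x)/Q(x)\big)^{\alpha}$ in one dimension. With $P = \mathcal{N}(m, \sigma^2)$, $Q = \mathcal{N}(0,\sigma^2)$ and $m = \norm{\vect{\mu}}_2$, the likelihood ratio simplifies to $\exp\!\big((2xm - m^2)/(2\sigma^2)\big)$, so $(P(x)/Q(x))^{\alpha}$ is a constant times an exponential that is \emph{linear} in $x$; taking the expectation under $Q$ is then just the Gaussian moment generating function (equivalently, completing the square), which yields $\mathds{E}_{x\sim Q}(P(x)/Q(x))^{\alpha} = \exp\!\big(\alpha(\alpha-1) m^2 / (2\sigma^2)\big)$. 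Applying $\frac{1}{\alpha-1}\log(\cdot)$ gives $D_{\alpha}(P\,\|\,Q) = \frac{\alpha\, \norm{\vect{\mu}}_2^2}{2\sigma^2}$. Finally, by the definition of $\mathcal{L}_2$ sensitivity together with the hypothesis $\Updelta = 1$ we have $\norm{\vect{\mu}}_2 = \norm{f(D) - f(D')}_2 \leq 1$, hence $D_{\alpha}(\mathcal{G}_{\sigma}f(D)\,\|\,\mathcal{G}_{\sigma}f(D')) \leq \frac{\alpha}{2\sigma^2}$ for every neighbouring pair, which is exactly $(\alpha, \frac{\alpha}{2\sigma^2})$-RDP.

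For the composition claim I would not recompute anything: each of the $K$ releases is $(\alpha, \frac{\alpha}{2\sigma^2})$-RDP by the first part, and applying Proposition~\ref{comp_rdp} inductively $K-1$ times (it is already stated in adaptive form, $X_2 \sim f_2(X_1,\mathcal{D})$, so it also covers the case where later mechanisms depend on earlier outputs) gives $(\alpha, K\cdot\frac{\alpha}{2\sigma^2})$-RDP, i.e.\ $(\alpha, \frac{\alpha K}{2\sigma^2})$-RDP.

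The computation is elementary, so the only real work is bookkeeping: one must check that the integral $\mathds{E}_{x\sim Q}(P(x)/Q(x))^{\alpha}$ is finite and the moment-generating-function step is legitimate for every $\alpha > 1$ — which it is, since after multiplying the likelihood ratio by the density of $Q$ the exponent is quadratic in $x$ with strictly negative leading coefficient $-1/(2\sigma^2)$ — and that the multivariate-to-univariate reduction genuinely cancels the directions orthogonal to $\vect{\mu}$. The boundary cases $\alpha \to 1^{+}$ (KL divergence) and $\alpha \to \infty$ follow by continuity but are not needed for the stated range of $\alpha$.
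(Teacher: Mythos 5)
Your proof is correct, and it is essentially the standard argument behind the cited result: the paper itself does not prove Proposition~\ref{rdp_gaussian} (it imports it from \cite{mironov2017renyi}), and your derivation — reduce to a one-dimensional shifted Gaussian, compute $\mathds{E}_{x\sim Q}(P(x)/Q(x))^{\alpha}=\exp\big(\alpha(\alpha-1)\norm{\vect{\mu}}_2^2/(2\sigma^2)\big)$ via the Gaussian moment generating function, conclude $D_{\alpha}(P\,\|\,Q)=\frac{\alpha\norm{\vect{\mu}}_2^2}{2\sigma^2}\leq\frac{\alpha}{2\sigma^2}$ using sensitivity $\Updelta=1$, and then invoke Proposition~\ref{comp_rdp} inductively (correctly noting its adaptive form) for the $K$-fold claim — is exactly the computation in that reference. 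Two cosmetic remarks: $D_{\alpha}$ is not literally an $f$-divergence, but its invariance under bijective transformations (or simply a direct change of variables in the integral, which you could cite instead) justifies the translation-and-rotation step; and your finiteness check of the MGF step is the right bookkeeping for all $\alpha>1$, which is the only range the definition requires.
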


\subsection{NMF Problem Formulation} \label{nmf_problem_formulation}

\begin{algorithm}[t]
\caption{General NMF Algorithms: Two-Block Coordinate Descent Method \cite{qian2020fast}}
\begin{algorithmic}[1]

\REQUIRE Data matrix $\matr{V}$, number of iteration $n$
\ENSURE  Dictionary matrix $\matr{W}$
\\ \textit{Initialisation} : $\matr{W}_0$, $\matr{H}_0$

\FOR {$t = 0$ to $n-1$}

\STATE $\matr{W}_{t+1} \leftarrow $ update($\matr{V}$,$\matr{W}_{t}$,$\matr{H}_{t}$)

\STATE  $\matr{H}_{t+1} \leftarrow $ update($\matr{V}$,$\matr{W}_{t+1}$,$\matr{H}_{t}$)

\ENDFOR
\RETURN Optimized dictionary matrix $\matr{W}$ 

\end{algorithmic} 
\label{alg:alg_two_block} 
\end{algorithm}

Most of the algorithms discussed in the Section \ref{sec:introduction} follow the two-block coordinate descent framework shown in the Algorithm \ref{alg:alg_two_block}. First, the dictionary matrix $\matr{W}$ is updated, while keeping the coefficient $\matr{H}$ constant. Then, the updated dictionary matrix $\matr{W}$ is used to update the coefficient $\matr{H}$. The process continues until some convergence criteria is met. In our work, we use a robust non-negative matrix factorization solver considering the outliers~\cite{zhao2016online} based on projected gradient descent (PGD). We intend to decompose the noisy data matrix $\matr{V}$ as $\matr{V} \approx \matr{WH} + \matr{R}$, where $\matr{W}$ and $\matr{H}$ are defined as before. The matrix  $\matr{R} = [\vect{r}_{1},\vect{r}_{2}....,\vect{r}_{N}] \in \mathcal{R}^{D\times N}$ is a matrix containing the outliers of the data. Thus, the NMF optimization problem is reformulated as
\begin{equation} \label{update_optimization}
      \min_{\matr{W} \in \mathcal{C}, \matr{H}\geqslant 0,\matr{R} \in \mathcal{Q}} \frac{1}{N} \big( \frac{1}{2}     \norm{\matr{V}-\matr{W} \matr {H}-\matr{R}}_F^2 + \lambda\norm{\matr{R}}_{1,1} \big),
\end{equation}
where $\mathcal{C} \subseteq \mathcal{R}^{D\times K}$ is the constraint set for updating $W$, $\mathcal{Q}$ is the feasible set for $\matr{R}$ and $\lambda \geqslant 0$ is the regularization parameter. Intuitively, the outlier matrix $\matr{R}$ is sparse in nature and contains a smaller values compared to the noise-free original data entries. \textcolor{blue}{The sparsity of the outlier $\matr{R}$ is enforced by the choice of $\mathcal{L}_{1,1}$-norm regularization, as discussed in \cite{zhao2016online}. The level of sparsity is controlled by the hyper-parameter $\lambda$. In line with the work of \cite{zhao2016online}, we used the same hyper-parameters in most cases for our experiments. Evidently, a grid search can be performed to select optimal hyper-parameters for our proposed differentially-private NMF. Additionally, one can compare the performance of NMF with outliers with $\mathcal{L}_{2}$ or other norms, but we defer that for future work.}

Note that the robust NMF algorithm can not guarantee exact recovery of the original data matrix $\matr{V}$, as the loss function \eqref{update_optimization} is not convex in nature \cite{zhao2016online}. However, it can be shown empirically that the estimated basis matrix $\matr{\Whatv}$ can be meaningful, and the difference between the data matrix $\matr{V}$ and the new reconstructed matrix $\matr{\Whatv\Hhatv}$ is negligible and sparse \cite{zhao2016online,shen2014robust,fevotte2015nonlinear}. Now, following the two-block coordinate descent method mentioned in the Algorithm \ref{alg:alg_two_block}, we reformulate our optimization steps as follows:

\begin{enumerate}
    \item Update the coefficient matrix $\matr{H}_t$ and outlier matrix $\matr{R}_t$ based on the fixed dictionary matrix $\matr{W}_{t-1}$. Here the optimization can be done as \begin{equation}\label{hr}
        (\matr{H}_t,\matr{R}_t)= \argmin_{\matr{H}\geqslant 0,\matr{R} \in \mathcal{Q}} \hspace{5pt} L( \matr{V},\matr{W}_{t-1},\matr{H},\matr{R}),
    \end{equation} 
    where the loss function $L$ is
    \begin{align*}
        L(\matr{V},\matr{W},\matr{H},\matr{R}) &\triangleq \frac{1}{N} ( \frac{1}{2} \norm{\matr{V}- \matr{WH}- \matr{R}}_F^2\\
        & +\lambda \norm{\matr{R}}_{1}) \numberthis \label{loss_org}.
    \end{align*}
    
    Here constraint set $\mathcal{Q} \triangleq \set{\vect{r} \in \mathds{R}^D \text{ | } \norm{ \vect{r}}_{\infty} \leqslant M}$ keeps the entries of outlier matrix $\matr{R}$ uniformly bounded. The value of $M$ depends on the data set and noise distribution. For example, in the gray scale image data with $2^b-1$ levels in each pixel, we can set $M=2^b-1$ where b is the number of bits to present the pixel value.      
    
    \item After optimization with respect to $\matr{H}_t$ and $\matr{R_t}$, update $\matr{W}_t$ using the same loss function \eqref{loss_org}:
    \begin{equation}\label{w}
        \matr{W}_t= \argmin_{\matr{W} \in \mathcal{C}} \hspace{5pt} L(\matr{V},\matr{W},\matr{H}_{t},\matr{R}_{t}).
    \end{equation} 
    
    Here, the set $\mathcal{C}$ constrains the columns of dictionary matrix $\matr{W}$ into a unit (non-negative) $\ell_2$ ball to keep the matrix entries bounded \cite{lin2007convergence,lin2007projected}. \\
\end{enumerate}

\noindent\textbf{PGD Solver for \eqref{hr}. }Equation \eqref{hr} can be  solved by following the two steps alternatively for a fixed $\matr{W}$. 
\begin{equation} \label{argminH}
    \matr{H}^+ := \argmin_{\matr{H'} \geqslant 0}Q (\matr{H'} | \matr{H}),
\end{equation}
\begin{equation} \label{argminR}
    \matr{R}^+= \argmin_{\matr{R'} \in \mathcal{Q}} \frac{1}{N}\Big ( \frac{1}{2} \norm{\matr{V}- \matr{WH_{t}}- \matr{R'}}_F^2 + \lambda \norm{\matr{R'}}_{1} \Big),
\end{equation}
where
\begin{equation}
    Q(\matr{H'}|\matr{H}) \triangleq q(\matr{H})+<\bigtriangledown q (\matr{H}), \matr{H'}-\matr{H}> + \frac{1}{2\eta N} \norm{\matr{H'}-\matr{H}}_2^2, 
\end{equation}
\begin{equation} \label{q(h)}
    q(\matr{H}) \triangleq \frac{1}{2N} \norm{\matr{V}-\matr{WH}-\matr{R}}_F^2.
\end{equation}
Here, $\eta$ is the fixed step size. Minimizing both \eqref{argminH} and \eqref{argminR} have closed-form solutions. For \eqref{argminH}, the solution can be formulated as following
\begin{equation} \label{h_update}
    \matr{H}^+ := \mathcal{P}_{+}(\matr{H}-\eta_{H} \bigtriangledown q(\matr{H}) ).
\end{equation}
Here, we replace the step size $\eta$ with $\eta_{H}$ to distinguish this from the dictionary matrix $\matr{W}$ update. We use a fixed step size to ease the hyper-parameter setting in the whole iteration process. $\bigtriangledown q(\matr{H})$ is the gradient function derived by doing partial derivative of \eqref{q(h)} with respect to $\matr{H}$.
\begin{equation} \label{Hgradient}
    \triangledown q(\matr{H})=\frac{1}{N} \big(\matr{W}^{\top}\matr{WH}-\matr{W}^{\top}( \matr{V}- \matr{R})\big).
\end{equation}
We use the following matrix properties \cite{petersen2008matrix} to find the expression of $\bigtriangledown q(\matr{H})$: $\norm{\matr{A}}_{2}^2=\tr(\matr{A^\top}\matr{A})$, and $\bigtriangledown \tr(\matr{X}^\top\matr{A}) = \matr{A}$, where the gradient is taken with respect to $\matr{X}$.
% \begin{equation} 
%     ,
% \end{equation}
% and
% \begin{equation} \label{matrix_prop_2}
%     \frac{\partial \tr(\matr{X}^\top\matr{A})}{\partial \matr{X}} = \matr{A }.
% \end{equation}
For \eqref{argminR}, the solution is straightforward.
\begin{equation}\label{r_update}
    \matr{R}^+=S_{\lambda,M} \big(\matr{V}-\matr{WH^+}\big).
\end{equation}
Here, $S_{\lambda,M}(\matr{X})$ performs element-wise thresholding as:
\begin{equation}
    S_{\lambda,M}(\matr{X})_{ij} := 
    \begin{cases}
       0, & \norm{x_{ij}} < \lambda \\
       
       x_{ij}-\textrm{sgn}(x_{ij})\lambda, & \lambda \leqslant \norm{x_{ij}} \leqslant \lambda+M \\
       
       \textrm{sgn}(x_{ij})M. & \norm{x_{ij}} > \lambda+M

    \end{cases}
\end{equation}
In the $t$-th iteration, we update the matrices $\matr{H}$ and $\matr{R}$ according to \eqref{h_update} and \eqref{r_update}; until some stopping criteria is met~\cite{zhao2016online}.\\

\noindent\textbf{PGD Solver for \eqref{w}. }We can rewrite (\ref{w}) as follows
\begin{equation} \label{WAB}
    \matr{W}_t= \argmin_{\matr{W} \in \mathcal{C}} \frac{1}{2}\tr(\matr{W}^\top\matr{W}\matr{A}_t)-\tr(\matr{W}^\top\matr{B}_{t}),
\end{equation}
where $\matr{A}_t \triangleq \frac{1}{N}\matr{H}_{t}\matr{H}_{t}^\top$, and $\matr{B}_t \triangleq \frac{1}{N} (\matr{V-R_{t}})\matr{H}_{t}^\top$. To calculate the gradient value, we define new function $f_{W}(\matr{W})$.
\begin{equation} \label{fw}
    f_{W}(\matr{W})= \frac{1}{2}\tr(\matr{W}^\top\matr{W}\matr{A}_t)-tr(\matr{W}^\top\matr{B}_{t}).
\end{equation}
Taking partial derivative of \eqref{fw} with respect to $\matr{W}$, we find the following expression.
\begin{equation}
\label{w_gradient}
    \triangledown f_{W}(\matr{W})=\matr{WA}-\matr{B}.
\end{equation}
We use the aforementioned matrix property~\cite{petersen2008matrix} to derive the expression \eqref{w_gradient}. We write the update equation ensuring the constraints of $\matr{W}$.
\begin{equation} \label{w_update}
    \matr{W}^+=\mathcal{P}_{\mathcal{C}}( \matr{W}-\eta_{W}\triangledown f_{W}(\matr{W})).
\end{equation}
Here, $\eta_{W}$ is the step size to update $\matr{W}$. As in \eqref{h_update}, we use a fixed step size. The constraint projection function keeps the columns of the dictionary matrix $\matr{W}$ in unit $\ell_2$ ball. In \eqref{w_update}, each column is being updated as following
\begin{equation}
    \vect{w}_{n}^{+}:=\frac{\mathcal{P}_{+}\big(\vect{w}_{n}-\eta_{W}\triangledown f_{W}(\vect{w}_{n})\big)}{\max\Big(1,\norm{\mathcal{P}_{+}\big(\vect{w}_{n}-\eta_{W}\triangledown f_{W}(\vect{w}_{n})\big)_{}}_{2}\Big)}, \forall n \in [K].
\end{equation}
Similarly as the PGD Solver for \eqref{hr}, we use \eqref{w_update} to update the dictionary matrix $\matr{W}$. The steps stated above are followed until we reach the optimum loss point of \eqref{update_optimization}. In the Section \ref{experimental_results}, we will discuss hyper-parameter tuning and matrix initialization methods for better optimization. 

Now that we discussed estimating the dictionary matrix $\matr{W}$ from the data matrix $\matr{V}$ without any privacy constraint, in the next section we will show how to preserve and control privacy leakage for the NMF.

\section {Proposed method} \label{proposed_method}

\subsection{Separate Private and Non-private Training Nodes} 

\begin{figure}[t]
    \centering
    \includegraphics[scale=0.25]{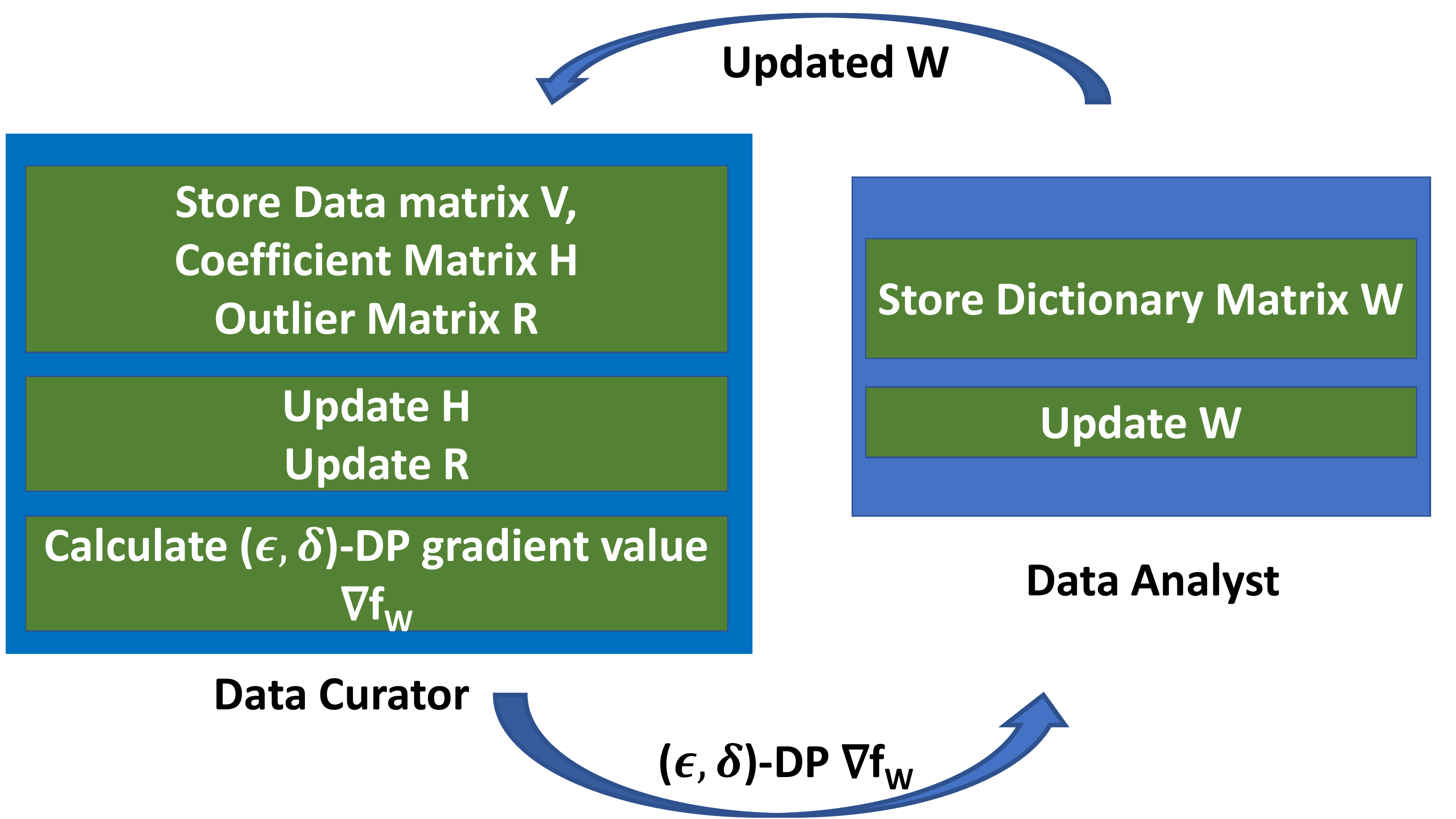}
    \caption{Schematic diagram of privacy-preserving NMF}
    \label{fig:basic_archi}
\end{figure}

As mentioned earlier, we are interested in estimating the dictionary matrix $\matr{W}$, which captures the meaningful population feature. The decomposition also produces two more matrices $\matr{H}$ and $\matr{R}$.
%; among them, the coefficient matrix $\matr{H}$ is user data-dependent.
All these three matrices are estimated from the user data $\matr{V}$; however, we only share the dictionary matrix $\matr{W}$ in public. Therefore, we need to estimate the matrix $\matr{W}$ satisfying differential privacy. Intuitively, we need to process the user-sensitive data and population feature data separately. Fig.\ref{fig:basic_archi} shows the basic system diagram which serves the purposes. Here, there are two data processing centers: one is the ``Data Curator'', which holds the sensitive data: data matrix $\matr{V}$ and the coefficient matrix $\matr{H}$ and as well as the outlier matrix $\matr{R}$. It updates the matrices $\matr{H}$ and $\matr{R}$ as mentioned in the Section \ref{nmf_problem_formulation}. Next, it calculates the gradient value $f_{W}(\matr{W})$ and adds white noise with variance depends on the privacy budget and $\mathcal{L}_2$ sensitivity of the gradient function $f_{W}(\matr{W})$. The other data processing center is the ``Data Analyst'' center, where the noisy gradient values are passed to. This center then updates the dictionary matrix $\matr{W}$ with the received noisy gradient value and passes the updated $\matr{W}$ to the Data Curator center. This cycle continues until the stopping criteria is met. At the end, we get a differentially-private dictionary matrix $\matr{W}_\textrm{private}$ at the Data Analyst center. 

\subsection{Derivations Related to Estimating Differentially-Private $\matr{W}$}
\begin{table}[t]
    \centering
    \begin{tabular}{c c c c}
         \hline
         Dataset & $N$ & $D$ & $K$ \\ 
         \hline
         Guardian News Articles & $4551$ & $10285$ & $8$ \\
         UCI-news-aggregator & $3000$ & $93$ & $7$ \\
         RCV1 & $9625$ & $2999$ & $4$ \\
         TDT2 & $9394$ & $3677$ & $30$ \\
         YaleB & $2414$ & $1024$ & $38$ \\
         CBCL & $2429$ & $361$ & $50$ \\
         \hline
               
    \end{tabular}
    \caption{Summary of data sets}
    \label{tab:dataset_size}
\end{table}
In this section, we will show the necessary proof and derivations related to the calculation of $\mathcal{L}_2$ sensitivity for estimating the dictionary matrix $\matr{W}$ satisfying differential privacy. 
As mentioned earlier, we use the Gaussian mechanism to deploy the DP mechanism in our non-private algorithm. Prior to adding noise, we need to calculate the $\mathcal{L}_2$ sensitivity of the function of whose output we will make randomize. In our implementation, it is $\bigtriangledown f_{W}(\matr{W})$ -- the gradient function for updating $\matr{W}$ matrix: $\triangledown f_{W}(\matr{W})=\matr{WA}-\matr{B}$.
% \begin{equation}
%     \triangledown f_{W}(\matr{W})=\matr{WA}-\matr{B}. \tag {\ref{w_gradient}}
% \end{equation}
As $\triangledown f_{W}(\matr{W})$ depends on the statistics matrix $\matr{A}$ and $\matr{B}$, we need to calculate their $\mathcal{L}_2$ sensitivity separately. Let us first calculate the $\mathcal{L}_2$ sensitivity of matrix $\matr{A}=\frac{1}{N} \matr{H}\matr{H}^\top$.
% \begin{equation} 
%     \matr{A} = \frac{1}{N} \tag{\ref{A}} \matr{H}\matr{H}^\top.
% \end{equation}
Consider  two neighboring data sets, the corresponding coefficient matrices are $\matr{H}$ and $\matr{H}'$. By definition, they differ by only one user data. In our calculation, We consider that the difference is at the last $N$-th column entries. We derive the $\mathcal{L}_2$ sensitivity $\Updelta_{A}$ following the definition.  
\begin{align*}
    \Updelta_{A} &= \max \frac{1}{N}\norm{\matr{H} \matr{H}^\top-\matr{H}' \matr{H}'^\top}_F \\
    &=\max \frac{1}{N}\norm{\vect{h}_N \vect{h}^\top_N -\vect{h}'_N \vect{h}'^\top_N}_F \\
    &\leqslant \max \frac{1}{N}\big(\norm{\vect{h}_N \vect{h}_N^\top}_F+\norm{\vect{h}'_N \vect{h}'^\top_N}_F\big) \\
    &\leqslant \max \frac{1}{N} \big( \norm{\vect{h}_N}_2\norm{\vect{h}_N}^\top_2 + \norm{\vect{h}'_N}_2\norm{\vect{h}'_N}^\top_2 \big)\\
    &=\max \frac{1}{N} \big (\norm{\vect{h}_N}_2^2+\norm{\vect{h}'_N}_2^2\big) \\
    &=\frac{2}{N}\times \text{(max $\mathcal{L}_2$ norm of column of $\matr{H}$)}^2 \numberthis \label{A_sensitivity}, 
\end{align*}
where we have used the triangle inequality and $\norm{\vect{ab}}\leqslant \norm{\vect{a}}\norm{\vect{b}}$. To get a bounded value in \eqref{A_sensitivity}, we need to define the max $\mathcal{L}_2$ norm for the column of $\matr{H}$. One way to do that is by normalizing each column of $\matr{H}$, and therefore, $\Updelta_{A} = \frac{2}{N}$. Next, we will calculate the $\mathcal{L}_2$ sensitivity of $\matr{B} = \frac{1}{N}(\matr{V-R})\matr{H}^\top$.
% \begin{equation} 
%     \matr{B} = \frac{1}{N}(\matr{V-R})\matr{H}^\top \tag{\ref{B}}.
% \end{equation}
Following the definition, we consider two neighboring data matrices $\matr{V}$ and $\matr{V}'$ and their corresponding neighboring coefficient matrices $\matr{H}$ and $\matr{H}'$. The details calculation of calculating $\mathcal{L}_2$ sensitivity $\Updelta_{B}$ is given as follows:  
\begin{align*}
    \Updelta_{B} &= \max \frac{1}{N}\norm{(\matr{V}-\matr{R}) \matr{H}^\top-(\matr{V}'-\matr{R}) \matr{H}'^\top}_F \\
    &=\max \frac{1}{N}\norm{(\vect{v}_N-\vect{r}_N) \vect{h}^\top_N-(\vect{v}_N'-\vect{r}_N) \vect{h}'^\top_N}_F \\
    &\leqslant \max \frac{1}{N}\big(\norm{(\vect{v}_N-\vect{r}_N) \vect{h}^\top_N}_F+\norm{(\vect{v}_N'-\vect{r}_N)\vect{h}'^\top_N}_F\big) \\
    &\leqslant \max \frac{1}{N} \big( \norm{(\vect{v}_N-\vect{r}_N)}_2\norm{\vect{h}_N}^\top_2\\ &+\norm{(\vect{v}_N'-\vect{r}_N)}_2\norm{\vect{h}'_N}^\top_2 \big) \\\
    &=\max \frac{1}{N} \big (\norm{(\vect{v}_N-\vect{r}_N)}_2+\norm{(\vect{v}_N'-\vect{r}_N)}_2\big)\\
    &=\max \frac{2}{N} \norm{(\vect{v}_N-\vect{r}_N)}_2 \leqslant \max \frac{2}{N}  (\norm{\vect{v}_N}+\norm{\vect{r}_N})\numberthis \label {B_sensitivity},
\end{align*}
where the second-last equality follows from $\max_{\forall n}\norm{\vect{h}_n}_2=1$, and the last inequality follows from $\norm{\matr{a}-\matr{b}}\leqslant \norm{\matr{a}}+\norm{\matr{b}}$. To get a constant $\mathcal{L}_2$ sensitivity value in \eqref{B_sensitivity}, we can normalize the columns of $\matr{V}$ and $\matr{R}$ to have unit $\mathcal{L}_2$-norm. Thus, we have $\Updelta_{B} = \frac{4}{N}$. Note that, if we do not model the outliers explicitly, we would have $\Updelta_{B}=\frac{2}{N}$. 

Now, as we have computed the $\mathcal{L}_2$ sensitivities $\Updelta_{A}$ and $\Updelta_{B}$, we can generate noise perturbed statistics $\overline{\matr{A}},\overline{\matr{B}}$ following the Gaussian mechanism~\cite{dwork2014algorithmic}. Using these values, we can compute the noisy gradient $\overline{\triangledown f_{W}(\matr{W})}$ and update our dictionary matrix $\matr{W}$. At the end of optimization, we will obtain the differentially private dictionary matrix $\matr{W}_\textrm{private}$. The detailed step-by-step description of our proposed method is summarized in Algorithm~\ref{alg:nmf_private_outlier}. 

\begin{algorithm}[t]
\caption{Privacy Preserving NMF with Outliers}
\begin{algorithmic}[1]

\REQUIRE Data matrix $\matr{V}$, step size $\eta_H$ and $\eta_W$, maximum number of iterations $T$, sensitivities $\Updelta_{A}$ and $\Updelta_{B}$, privacy parameters $\epsilon_t,\ \delta$, regularization parameter $\lambda$
\ENSURE Differentially private dictionary matrix $\matr{W}_\textrm{private}$
\\ \textit{Initialisation} : $\matr{W}_0$, $\matr{H}_0$, $\matr{R}_0$.

\FOR {$t = 1$ to $T$} 

\STATE \algorithmiccomment{/* At Data Curator */} Learn $\matr{H}_t$ and $\matr{R}_t$ based on $\matr{W}_{t-1}$ \\

$(\matr{H}_t,\matr{R}_t) := \argmin_{\matr{H}\geqslant 0,\matr{R} \in \mathcal{Q}} \hspace{5pt} L( \matr{V},\matr{W}_{t-1},\matr{H},\matr{R})$,\\

 where matrices update as follow\\
 $\matr{H}^+ := \mathcal{P}_{+}(\matr{H}-\eta_{H} \bigtriangledown q(\matr{H}) ).$\\
 $\matr{R}^+ :=S_{\lambda,M} \big(\matr{V}-\matr{WH^+}\big).$
 
\vspace{3pt} 
\STATE Calculate the noise perturbed statistics matrices\\

$\overline{\matr{A}_t} := \frac{1}{N} \matr{H}_{t}\matr{H}_{t}^\top + \mathcal{N}(0,\tau^2_A)^{K \times K}$,\\

$\overline{\matr{B}_t} := \frac{1}{N} (\matr{V-R_{t}})\matr{H}_{t}^\top + \mathcal{N}(0,\tau^2_B)^{D \times K}$,\\

where\\

$\tau_A = \frac{\Updelta_A}{\epsilon_t} \sqrt{2\log\frac{1.25}{\delta}}$.\\

$\tau_B = \frac{\Updelta_B}{\epsilon_t} \sqrt{2\log\frac{1.25}{\delta}}$.

\vspace{3pt} 
\STATE Calculate the noisy gradient $\overline{\triangledown f_{W}(\matr{W})}$ with the noise perturbed statistics matrices: $\overline{\triangledown f_{W}(\matr{W})}=\matr{W}_{t-1}\overline{\matr{A}_{t}}-\overline{\matr{B}_{t}}$ \\

\STATE \algorithmiccomment{/* At Data Analyst */} Learn dictionary matrix with the noisy gradient $\overline{\triangledown f_{W}}$ 

$\matr{W}_t := \argmin_{\matr{W} \in \mathcal{C}} \frac{1}{2}\tr(\matr{W}^\top\matr{W}\overline{\matr{A}_t})-\tr(\matr{W}^\top\overline{\matr{B}_{t}})$.\\

where dictionary matrix updates as follow\\

 $\matr{W}^+ :=\mathcal{P}_{\mathcal{C}}( \matr{W}-\eta_{W}\overline{\triangledown f_{W}(\matr{W})})$.
 
\vspace{3pt} 
\ENDFOR

\vspace{3pt} 
\RETURN $\matr{W}_\textrm{private}$ 

\end{algorithmic} 
\label{alg:nmf_private_outlier}
\end{algorithm}

\textcolor{blue}{
\begin{theorem}[Privacy of Algorithm~\ref{alg:nmf_private_outlier}]
\label{theorem1}
Consider Algorithm~\ref{alg:nmf_private_outlier} in the setting of Section~\ref{nmf_problem_formulation}. Then Algorithm\ref{alg:nmf_private_outlier} releases $\big(\frac{T\alpha_\mathrm{opt}}{2}(\frac{\Updelta^2_A}{\tau^2_A}+\frac{\Updelta^2_B}{\tau^2_B})+\frac{\log \frac{1}{\delta}}{\alpha_\mathrm{opt}-1},\delta\big)$ differentially-private basis matrix $\matr{W}_\textrm{private}$ for any $0<\delta <1$ after $T$ iterations, where $\alpha_{ \mathrm{opt}}=1+\sqrt{\frac{2}{T(\frac{\Updelta^2_A}{\sigma^2_A}+\frac{\Updelta^2_B}{\delta^2_B}) } \log \frac{1}{\delta}}$.
\end{theorem}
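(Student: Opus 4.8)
The plan is to carry out the privacy accounting entirely in R\'enyi differential privacy (RDP) and to pass to $(\epsilon,\delta)$-DP only at the very end, after optimizing over the R\'enyi order $\alpha$. The first observation is that the only data-dependent objects ever released are the $2T$ noise-perturbed statistics $\overline{\matr{A}_1},\overline{\matr{B}_1},\dots,\overline{\matr{A}_T},\overline{\matr{B}_T}$ produced at the Data Curator: each dictionary iterate $\matr{W}_t$ is computed at the Data Analyst solely from $\overline{\matr{A}_t},\overline{\matr{B}_t}$ and $\matr{W}_{t-1}$, hence is a post-processing of these noisy statistics, and the pairs $(\matr{H}_t,\matr{R}_t)$ never leave the Curator. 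Since (R\'enyi) DP is closed under post-processing, it therefore suffices to bound the cumulative privacy loss of the sequence of Gaussian releases; $\matr{W}_\textrm{private}$ inherits the same guarantee.

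Second, I would analyze a single release. By the sensitivity derivations \eqref{A_sensitivity}--\eqref{B_sensitivity}, $\frac1N\matr{H}\matr{H}^\top$ has $\mathcal{L}_2$-sensitivity $\Updelta_A$ and $\frac1N(\matr{V}-\matr{R})\matr{H}^\top$ has $\mathcal{L}_2$-sensitivity $\Updelta_B$ (reading the Frobenius norm as the Euclidean norm of the vectorized matrix). Proposition~\ref{rdp_gaussian} is stated for unit sensitivity, so I would rescale: since $\matr{A}_t/\Updelta_A$ has sensitivity $1$, adding $\mathcal{N}(0,\tau_A^2)$ to $\matr{A}_t$ is, up to the deterministic (hence privacy-free) multiplication by $\Updelta_A$, the unit-sensitivity Gaussian mechanism with noise scale $\tau_A/\Updelta_A$, which by Proposition~\ref{rdp_gaussian} is $(\alpha,\tfrac{\alpha\Updelta_A^2}{2\tau_A^2})$-RDP for every $\alpha>1$; the same argument gives $(\alpha,\tfrac{\alpha\Updelta_B^2}{2\tau_B^2})$-RDP for the release of $\overline{\matr{B}_t}$.

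Third, I would apply the adaptive RDP composition rule (Proposition~\ref{comp_rdp}) to the $2T$ releases, noting that each $\overline{\matr{A}_t},\overline{\matr{B}_t}$ may depend on everything released earlier through $\matr{W}_{t-1}$ — exactly the adaptive structure the rule permits — to conclude that the whole pipeline, and hence $\matr{W}_\textrm{private}$, is $\big(\alpha,\tfrac{T\alpha}{2}(\tfrac{\Updelta_A^2}{\tau_A^2}+\tfrac{\Updelta_B^2}{\tau_B^2})\big)$-RDP. Converting via Proposition~\ref{rdp_dp} yields, for every $\alpha>1$ and $0<\delta<1$, an $\big(\tfrac{T\alpha}{2}(\tfrac{\Updelta_A^2}{\tau_A^2}+\tfrac{\Updelta_B^2}{\tau_B^2})+\tfrac{\log(1/\delta)}{\alpha-1},\delta\big)$-DP guarantee. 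Finally, writing $c=\tfrac{T}{2}(\tfrac{\Updelta_A^2}{\tau_A^2}+\tfrac{\Updelta_B^2}{\tau_B^2})$ and $\beta=\alpha-1>0$, the $\epsilon$-bound is $c\beta+c+\tfrac{\log(1/\delta)}{\beta}$, which is convex in $\beta$ with unique minimizer $\beta^\star=\sqrt{\log(1/\delta)/c}$; this gives $\alpha_\mathrm{opt}=1+\sqrt{\tfrac{2\log(1/\delta)}{T(\Updelta_A^2/\tau_A^2+\Updelta_B^2/\tau_B^2)}}$ as in the statement, and substituting $\alpha_\mathrm{opt}$ back into the $\epsilon$-expression yields the claimed value.

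The step I expect to require the most care is the composition bookkeeping: one must check that replacing a single user's record perturbs $\matr{V}$, $\matr{H}_t$, and $\matr{R}_t$ simultaneously yet each conditional release stays Gaussian with sensitivity no larger than the $\Updelta_A,\Updelta_B$ already established, uniformly over the realized history of noisy iterates, so that Proposition~\ref{comp_rdp} can be applied term by term. The sensitivity rescaling of Proposition~\ref{rdp_gaussian} and the one-dimensional convex minimization over $\alpha$ are routine.
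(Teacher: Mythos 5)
Your proposal is correct and follows essentially the same route as the paper's proof: per-release RDP for the Gaussian mechanism via Proposition~\ref{rdp_gaussian}, composition of the $\overline{\matr{A}_t},\overline{\matr{B}_t}$ releases over $T$ iterations via Proposition~\ref{comp_rdp}, conversion to $(\epsilon,\delta)$-DP via Proposition~\ref{rdp_dp}, and minimization over $\alpha$ to obtain $\alpha_{\mathrm{opt}}$. The extra details you supply (post-processing of $\matr{W}_t$, the sensitivity rescaling, and the explicit convex minimization in $\beta=\alpha-1$) are refinements of steps the paper treats implicitly, not a different argument.
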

\begin{proof}
We now analyze Algorithm~\ref{alg:nmf_private_outlier} with R\'enyi Differential Privacy (RDP)~\cite{mironov2017renyi}. Recall that, at each iteration $t$, we compute the noisy estimate of the gradient $\overline{\triangledown f_{W}(\matr{W})}$, using two differentially-private matrices $\overline{\matr{A}_t}$ and $\overline{\matr{B}_t}$. According to Proposition~\ref{rdp_gaussian}, computation of these matrices satisfy $\left(\alpha, \frac{\alpha}{2\left(\frac{\tau_A}{\Updelta_A}\right)^2}\right)$-RDP and $\left(\alpha, \frac{\alpha}{2\left(\frac{\tau_B}{\Updelta_B}\right)^2}\right)$-RDP, respectively. According to Proposition~\ref{comp_rdp}, each step of Algorithm~\ref{alg:nmf_private_outlier} is $(\alpha,\frac{\alpha}{2}(\frac{\Updelta^2_A}{\tau^2_A}+\frac{\Updelta^2_B}{\tau^2_B}))$-RDP. If the number of required iterations for reaching convergence is $T$, then under $T$-fold composition of RDP, the overall algorithm is $(\alpha,\frac{T\alpha}{2}(\frac{\Updelta^2_A}{\tau^2_A}+\frac{\Updelta^2_B}{\tau^2_B}))$-RDP. From Proposition \ref{rdp_dp}, we have that the algorithm satisfies $(\frac{T\alpha}{2}(\frac{\Updelta^2_A}{\tau^2_A}+\frac{\Updelta^2_B}{\tau^2_B}) + \frac{\log\frac{1}{\delta}}{\alpha-1},\delta)$-DP for any $0<\delta <1$. For a given $\delta$, we can compute the optimal $\alpha$ as $\alpha_{ \mathrm{opt}} = 1+\sqrt{\frac{2}{T(\frac{\Updelta^2_A}{\tau^2_A}+\frac{\Updelta^2_B}{\tau^2_B}) } \log \frac{1}{\delta}}$. This $\alpha_{ \mathrm{opt}}$ provides the smallest $\epsilon$, i.e., the smallest privacy risk. Therefore, Algorithm~\ref{alg:nmf_private_outlier} releases a $(\frac{T\alpha_ \mathrm{opt}}{2}(\frac{\Updelta^2_A}{\tau^2_A}+\frac{\Updelta^2_B}{\tau^2_B}) + \frac{\log\frac{1}{\delta}}{\alpha_ \mathrm{opt}-1},\delta)$ differentially-private basis matrix $\matr{W}_\textrm{private}$ for any $0<\delta <1$.
\end{proof}
}

%\subsection{RDP Calculation}
%As our optimization method follows an iterative process, we add the Gaussian noise $\mathcal{N}(0,\tau^2)$ at every stage of the iteration. The multi-stage DP mechanism requires further mathematical treatment to recalculate the overall privacy budget. Using the propositions \eqref{rdp_dp},\eqref{comp_rdp},\eqref{rdp_gaussian} stated in the Section \ref{definations}, this study \cite{9612077} shows that a differentially private mechanism with $K$ stages; where in each stage we add noise from $\mathcal{N}(0, \sigma^2)$ to a function with $\mathcal{L}_2$ sensitivity $\Delta$; is equivalent to $(\frac{\alpha_\mathrm{opt} K \Updelta^2 }{2 \sigma^2}+\frac{\log \frac{1}{\delta}}{\alpha_\mathrm{opt}-1},\delta)$-DP for any $0<\delta <1$, where $\alpha_\mathrm{opt}=1+\sqrt{\frac{2 \sigma^2}{K \Updelta^2} \log \frac{1}{\delta}}$.

%In our implementation, this formula needs to be modified since we add noise to two statistical matrices: $\matr{A}$ and $\matr{B}$. This follows to modify the variance of noise terms and $\mathcal{L}_2$ sensitivity terms as: $\big(\frac{\alpha_\mathrm{opt} K  }{2}(\frac{\Updelta^2_A}{\sigma^2_A}+\frac{\Updelta^2_B}{\sigma^2_B})+\frac{\log \frac{1}{\delta}}{\alpha_\mathrm{opt}-1},\delta\big)$-DP for any $0<\delta <1$ and $\alpha_{ \mathrm{opt}}=1+\sqrt{\frac{2}{K(\frac{\Updelta^2_A}{\sigma^2_A}+\frac{\Updelta^2_B}{\delta^2_B}) } \log \frac{1}{\delta}}$. 

\noindent\textbf{Convergence of Algorithm~\ref{alg:nmf_private_outlier}. }We note that the objective function is
non-increasing under the two update steps (i.e., steps 2 and 5), and the objective function is bounded below. Additionally, the noisy gradient estimate $\overline{\triangledown f_{W}(\matr{W})}$ essentially contains zero mean noise. Although this does not provide guarantees on the excess error, the estimate of the gradient converges in expectation to the true gradient~\cite{bottou1999}. However, if the batch size is too small, the noise can be too high for the algorithm to converge~\cite{song2013stochastic}. Since the total additive noise variance is quite small, the convergence rate is faster. Note that a theoretical analysis of intricate relation between the excess error and the privacy parameters is beyond the scope of the current paper. We refer the reader Bassily et al.~\cite{bassily2014private} for further details.

\section {Experimental results}
\label{experimental_results}
In this section, we compare the utility of private and non-private algorithm. We define the objective value to show the comparison, quantifying how well the algorithm can decompose the matrix. The objective value is calculated using the following formula:
\begin{equation} \label{objective_value}
    \text{Objective Value} = \frac{1}{2N} \norm{\matr{V}^o-\matr{WH}}^2_F  .
\end{equation}
Here, $\matr{V}^o$ is the noise-free clean data set. In our experiments, some of the data sets contain noise, and some of them are not. For a fair comparison, we evaluate only how well the decomposition $\matr{W}$$\matr{H}$ can reconstruct the noise-free data $\matr{V}^o$.

To evaluate our proposed method, we use six real data sets. Among which, four are text data sets, and two are face image data sets. The size of the datasets and the corresponding latent dimensions $K$ are mentioned in the Tab. \ref{tab:dataset_size}. \textcolor{blue}{
For selection of the latent dimension $K$, we mentioned one procedure in Appendix \ref{topic_modeling}, where we calculate the topic coherence score of the Guardian News Articles datasets to select the optimum topic number. For the rest of the data sets, similar procedures can be followed.} In the experiments, we use fixed  $\delta=10^{-5}$ and vary $\epsilon$ to show the effect of the privacy budget. Also, we normalized each data sample of $\matr{V}$ so that it has a unit maximum index value.

\subsection{Hyper-parameter Selection and Initialization}
\label{hyperparameter}

We followed the hyper-parameter settings mentioned as~\cite{zhao2016online} except for the learning rate. The authors in~\cite{zhao2016online} suggested using the same learning rate for updating both the dictionary matrix $\matr{W}$ and the coefficient matrix $\matr{H}$ for ease of parameter tuning. The optimization process requires a different configuration in our privacy-preserving implementation. As discussed in Section \ref{proposed_method}, we add Gaussian noise in the gradient calculation $\triangledown f_{W}(\matr{W})$ for updating $\matr{W}$, wheres matrix $\matr{H}$ updates with its original gradient calculation $\triangledown q(\matr{H})$. This dissimilar nature for updating $\matr{W}$ and $\matr{H}$ motivates us to use different learning rates. \textcolor{blue}{We note that to get faster convergence, we need to choose the learning rates for updating $\matr{W}$ and $\matr{H}$. To that end, we can employ a grid search to find the optimum learning rates for $\matr{H}$ and $\matr{W}$. The time required for such a time search depends on the dataset and the search space. With sub-optimal learning rates, the convergence may be delayed, but the excess error of the proposed algorithm should be approximately the same given that sufficient time is allowed, and small enough learning rate is used.} In our experiments, we performed a grid search, and found that the learning rate for updating $\matr{W}$ should be much lower (about 10,000 to 20,000 times depending on the dataset) compared to that of updating $\matr{H}$. To initialize $\matr{W}_0$ and $\matr{H}_0$, we followed the Non negative Double Singular Value Decomposition (NNDSVD) \cite{boutsidis2008svd} approach, as we found that it performs better than random initialization in minimizing the objective value in \eqref{objective_value}. For sparseness, we initialized $\matr{R}_0$ with all zeros.

\subsection{Text Data Set} 

\begin{figure*}[t]%
\centering
\begin{subfigure}{.95\columnwidth}
\includegraphics[width=\columnwidth]{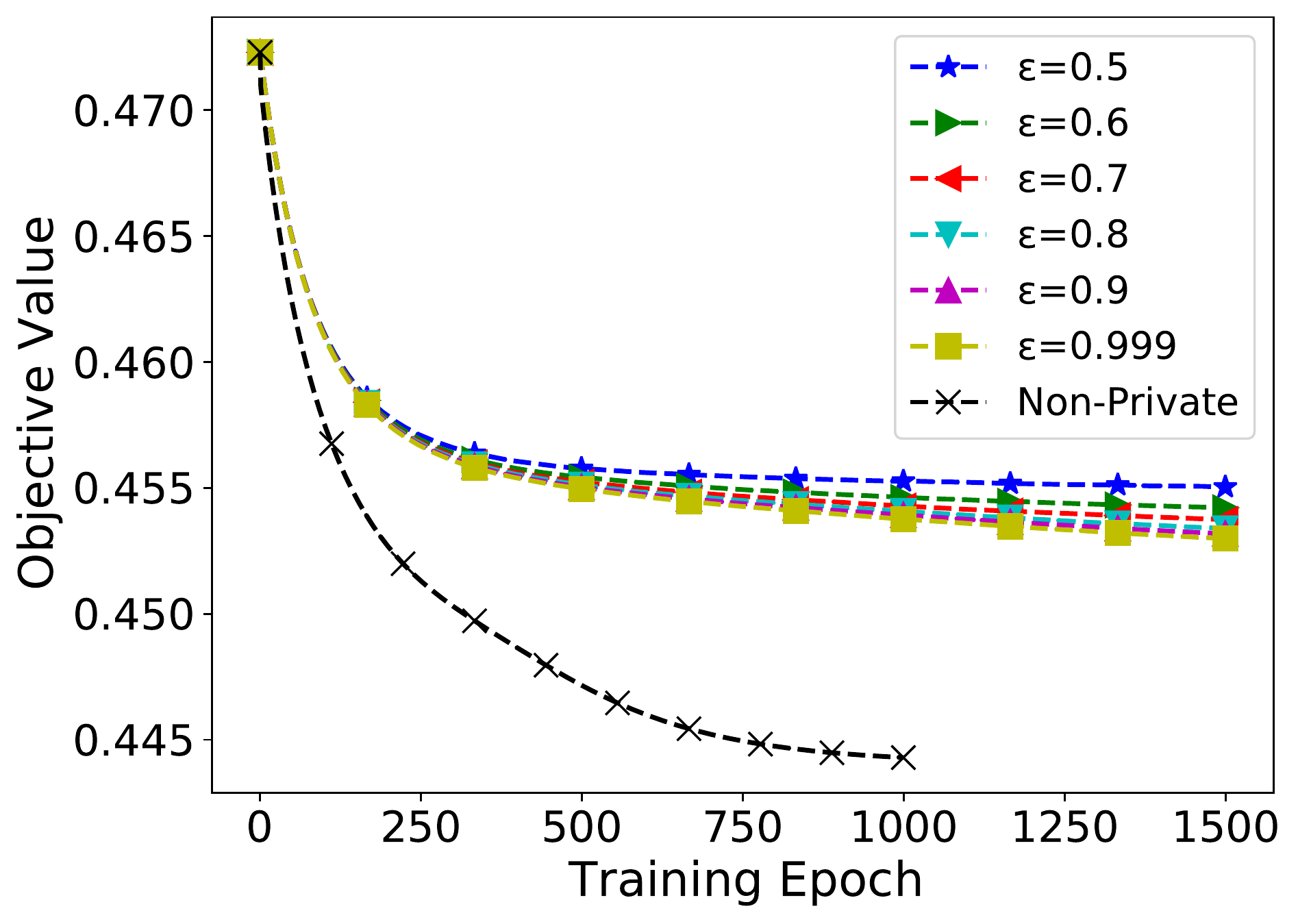}%
\caption{Guardian News Articles}
\label{fig:git_utility}
\end{subfigure}\hfill%
\begin{subfigure}{.95\columnwidth}
\includegraphics[width=\columnwidth]{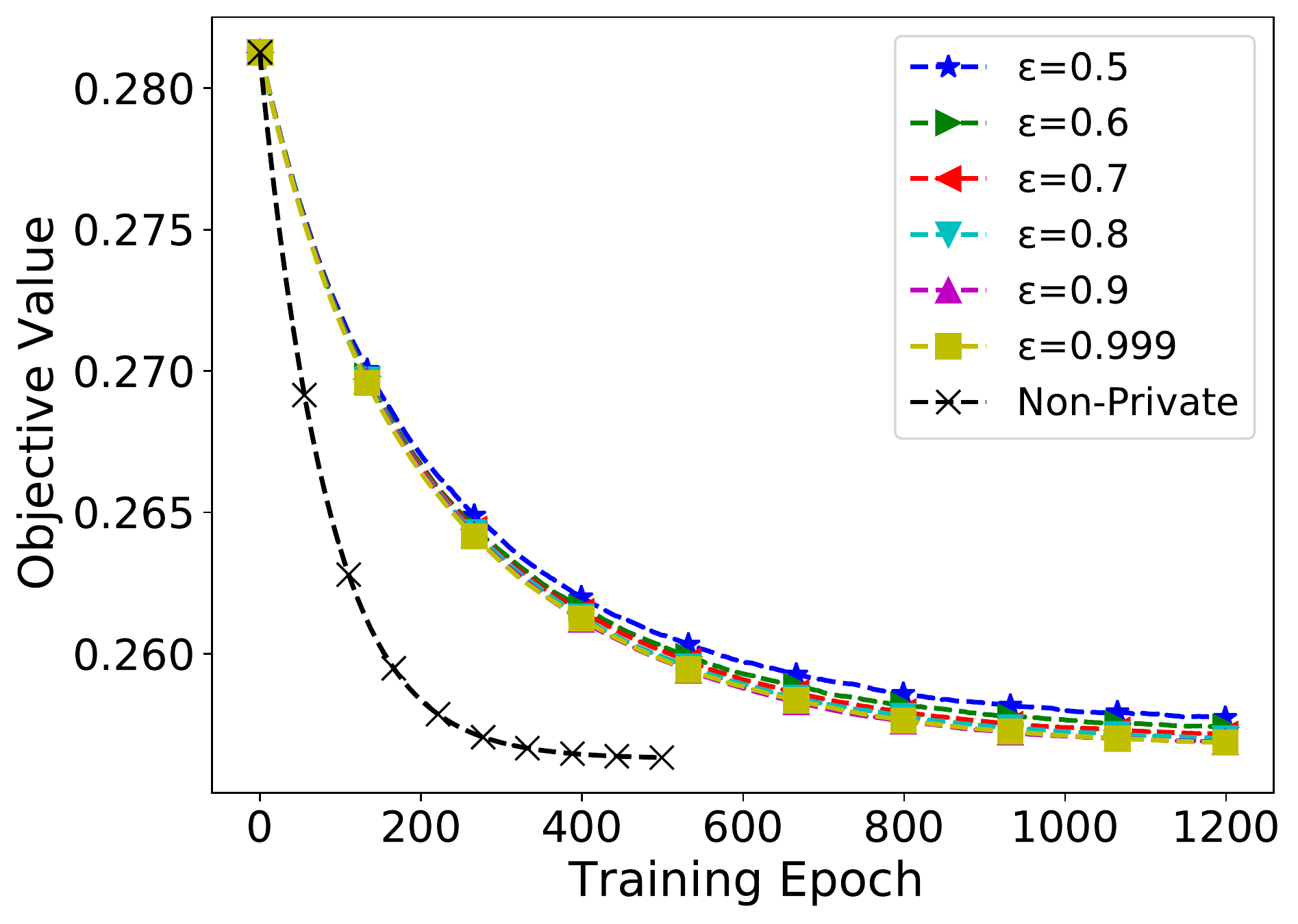}%
\caption{UCI-news-aggregator}
\label{fig:uci_utility}
\end{subfigure}%

\begin{subfigure}{.95\columnwidth}
\includegraphics[width=\columnwidth]{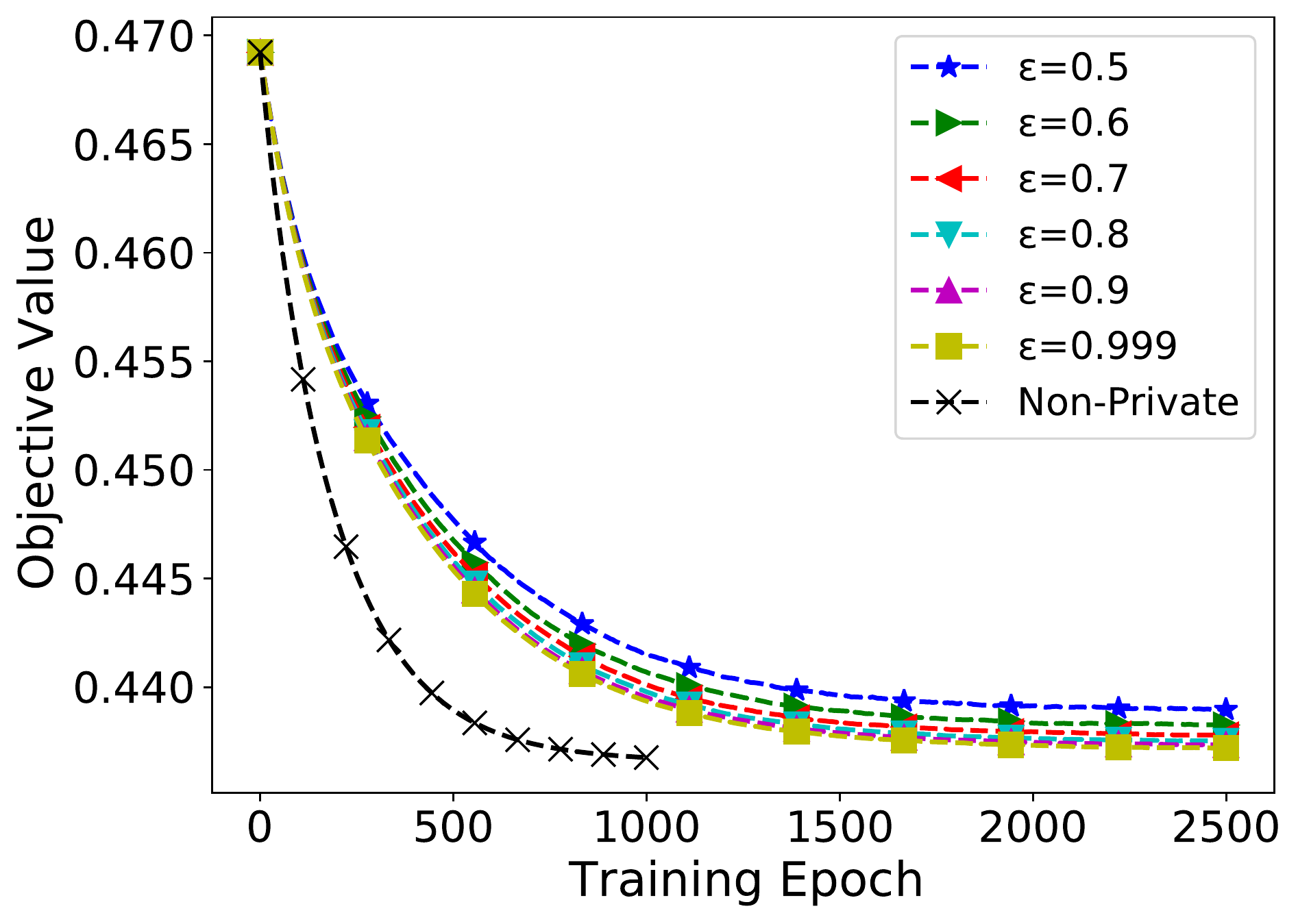}%
\caption{RCV1}
\label{fig:rcv1}
\end{subfigure}\hfill%
\begin{subfigure}{.95\columnwidth}
\includegraphics[width=\columnwidth]{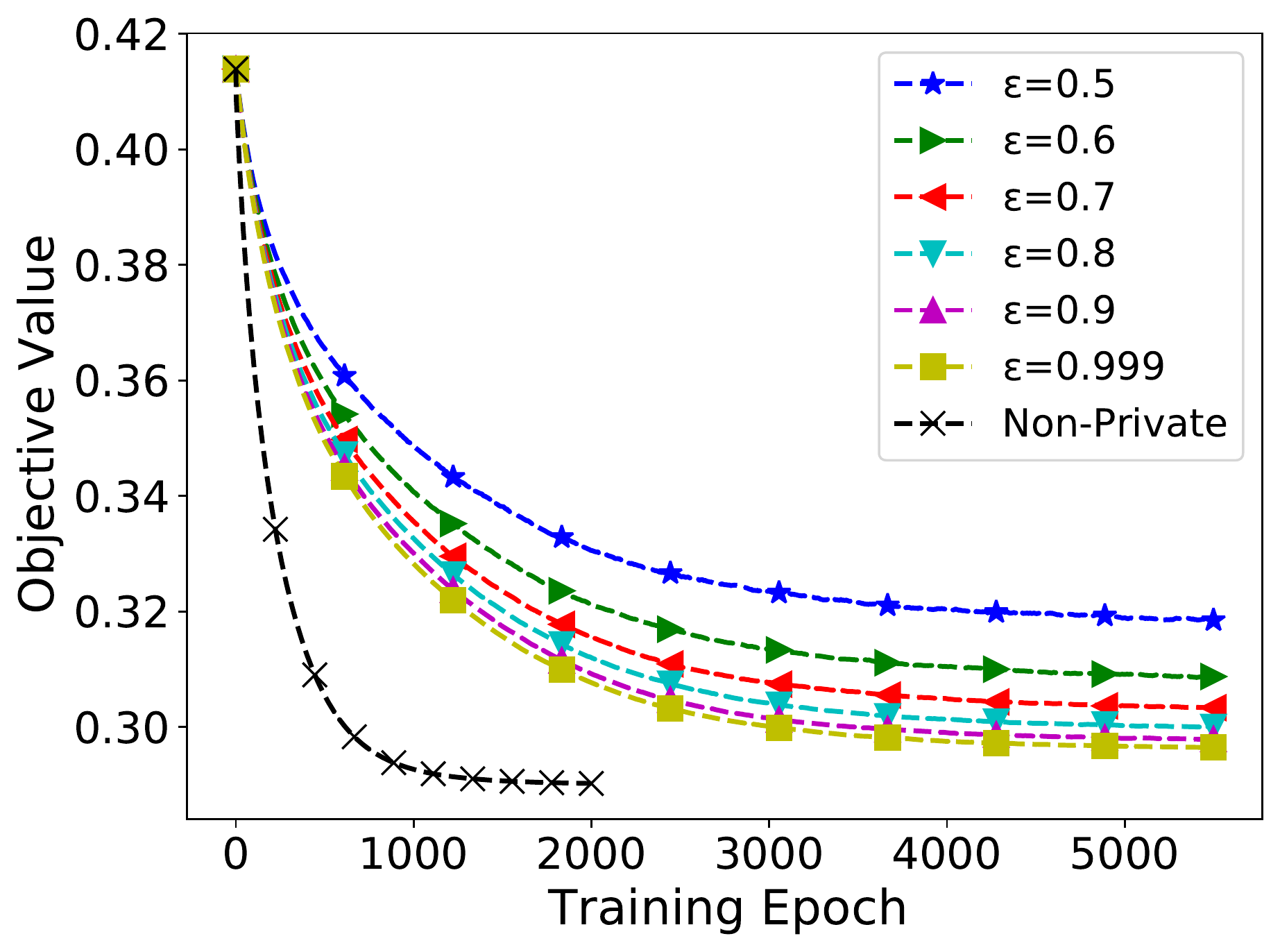}%
\caption{TDT2}
\label{fig:TDT2}
\end{subfigure}%

\caption{Utility Comparison on Text Data Set}
\label{fig:utility_text}
\end{figure*}

\begin{figure*}[t]%
\centering
\begin{subfigure}{.95\columnwidth}
\includegraphics[width=\columnwidth]{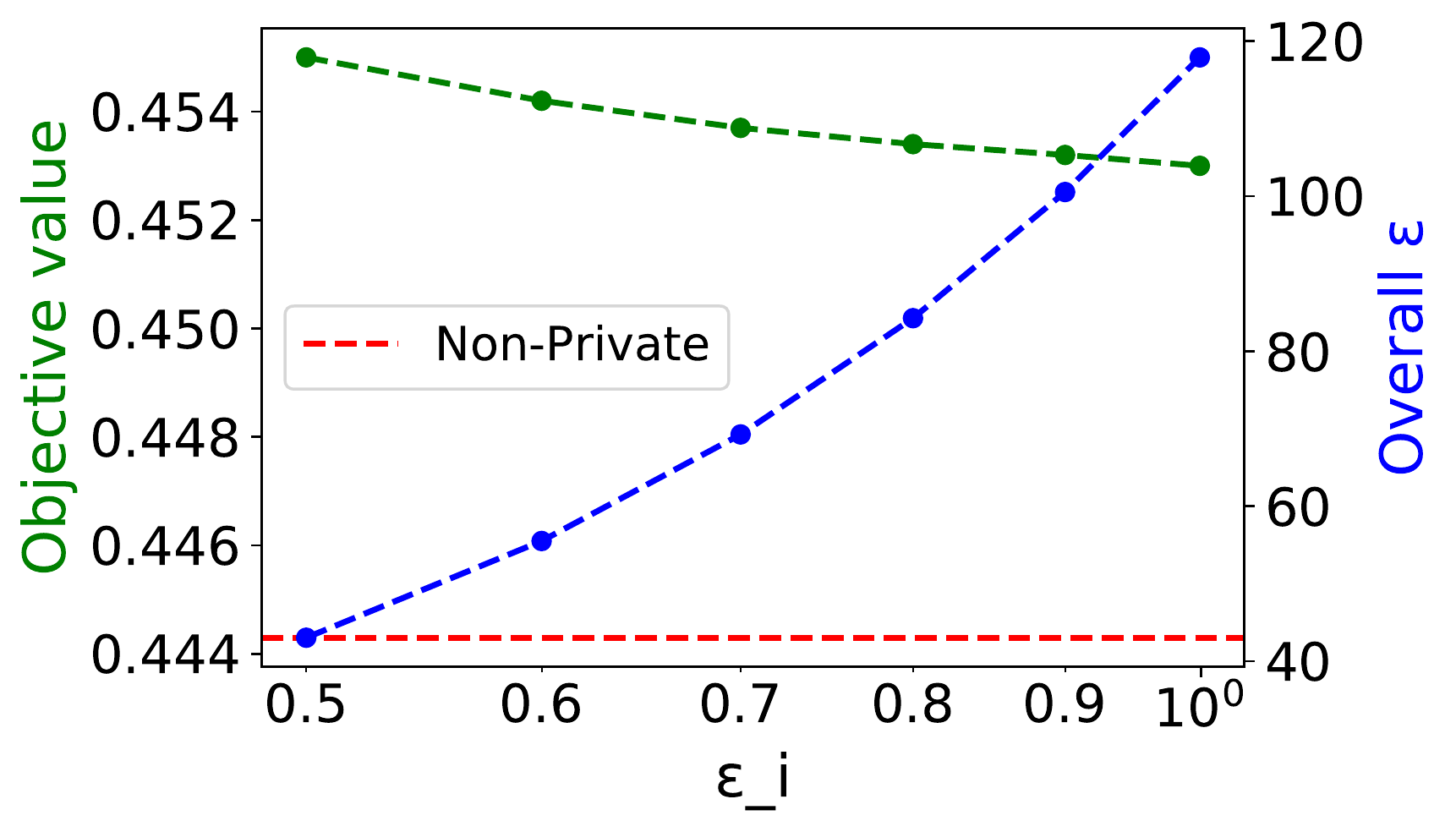}%
\caption{Guardian News Articles}
\label{fig:rdp_gurdian}
\end{subfigure}\hfill%
\begin{subfigure}{.95\columnwidth}
\includegraphics[width=\columnwidth]{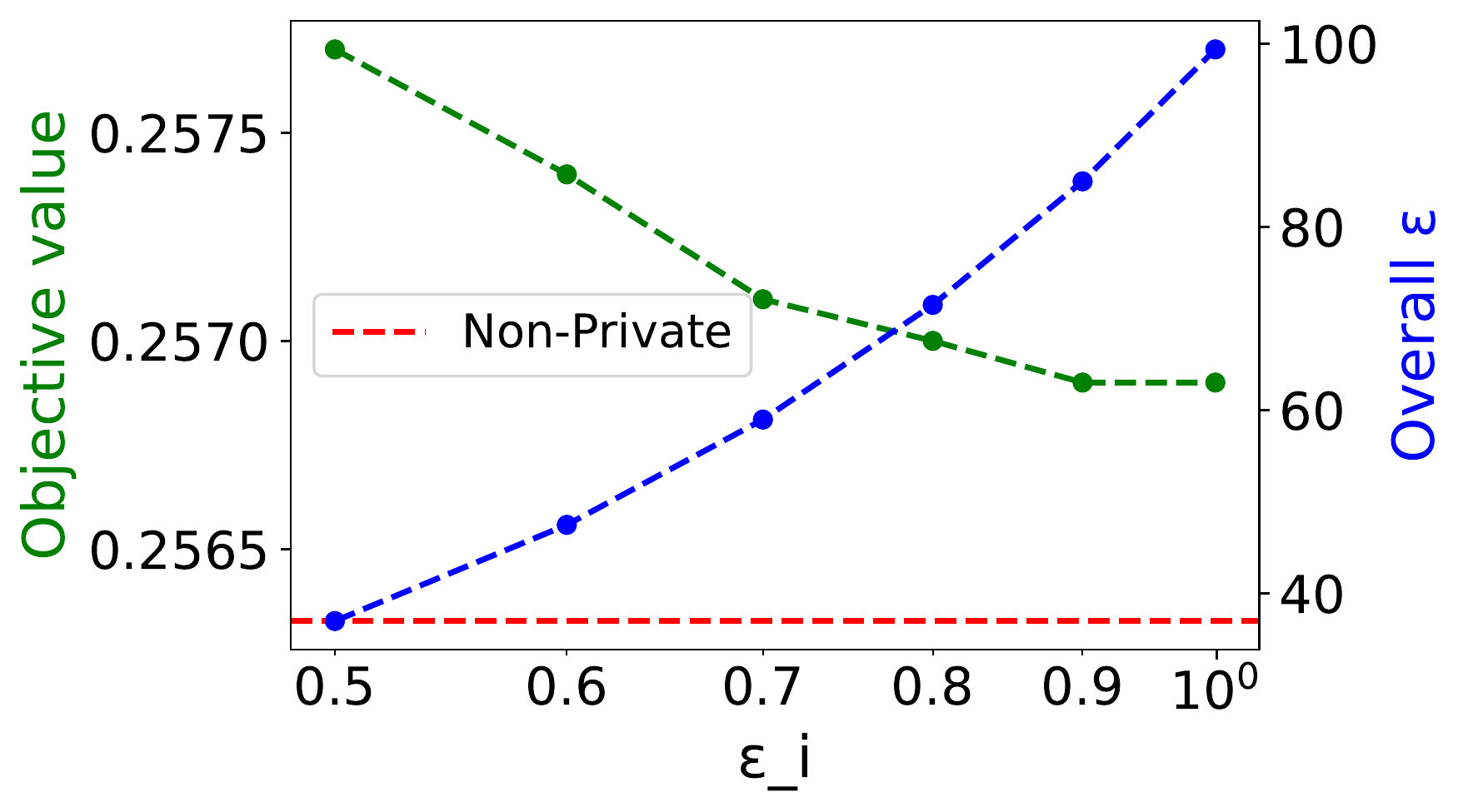}%
\caption{UCI-news-aggregator}
\label{fig:uci_rdp}
\end{subfigure}%

\begin{subfigure}{.95\columnwidth}
\includegraphics[width=\columnwidth]{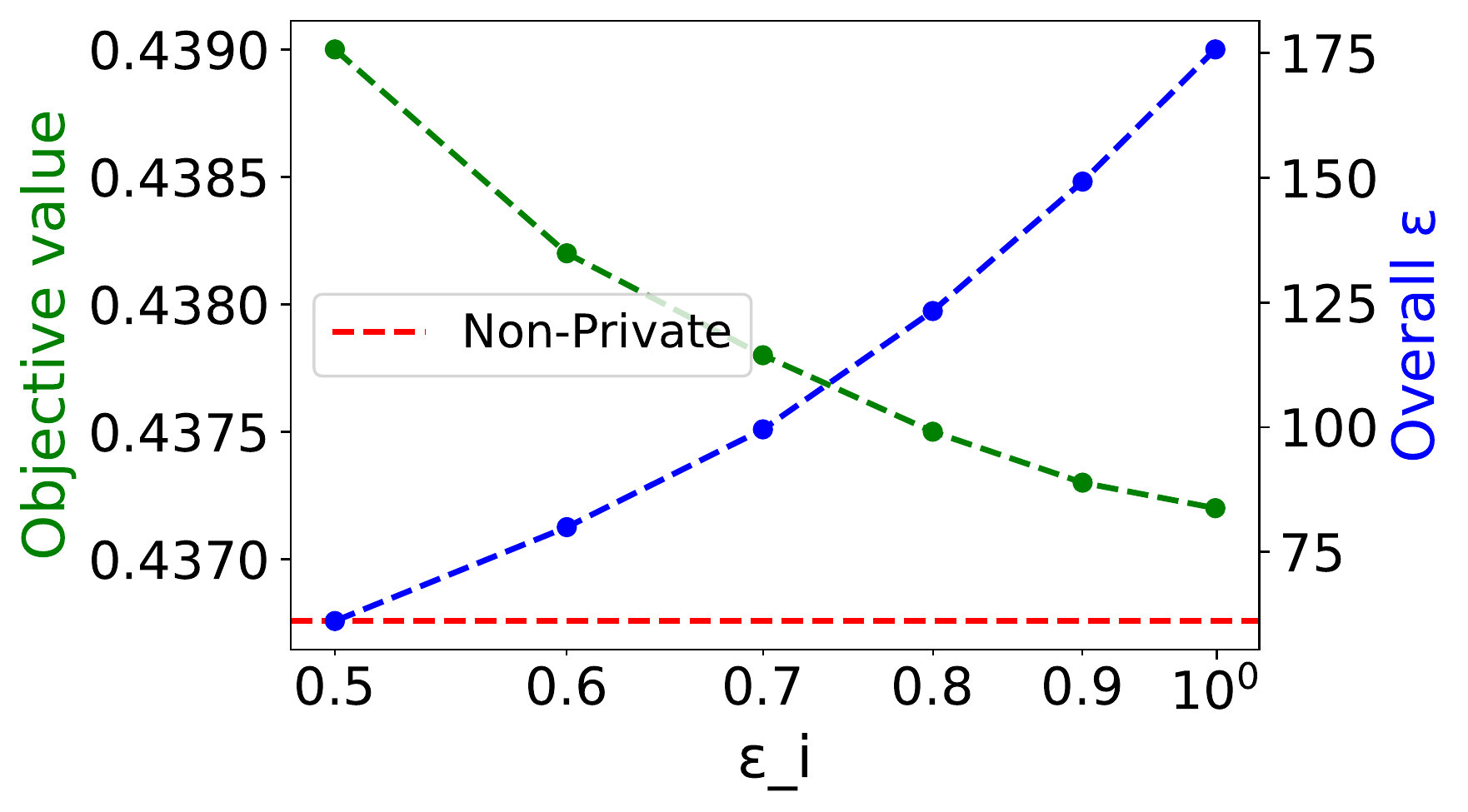}%
\caption{RCV1}
\label{fig:rcv1_rdp}
\end{subfigure}\hfill%
\begin{subfigure}{.95\columnwidth}
\includegraphics[width=\columnwidth]{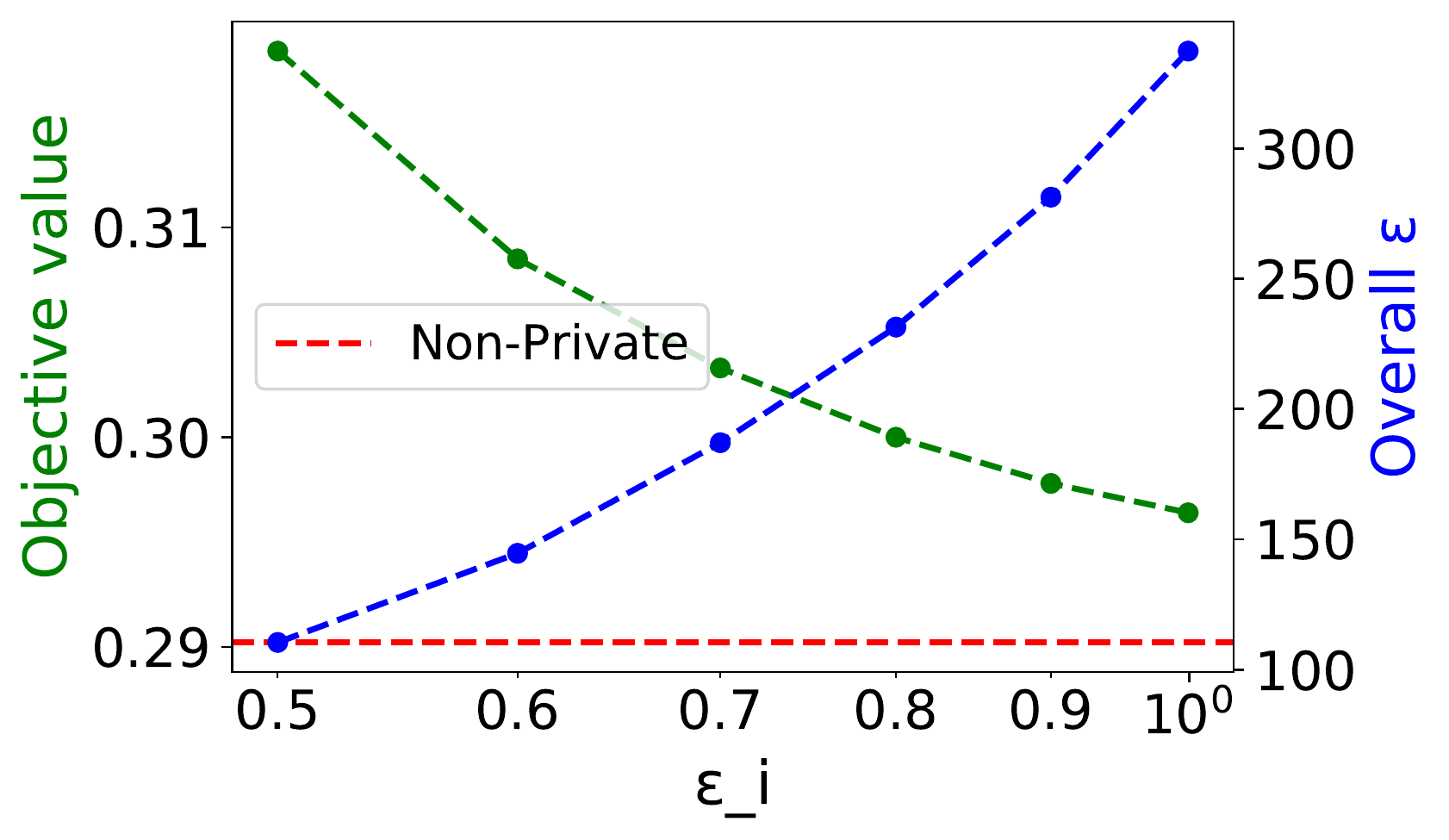}%
\caption{TDT2}
\label{tdt2_rdp}
\end{subfigure}%

\caption{Overall $\epsilon$ and Objective Value on Text Data Set}
\label{fig:rdp_text}
\end{figure*}

\begin{figure*}[t]%
\centering
\begin{subfigure}{.95\columnwidth}
\includegraphics[width=\columnwidth]{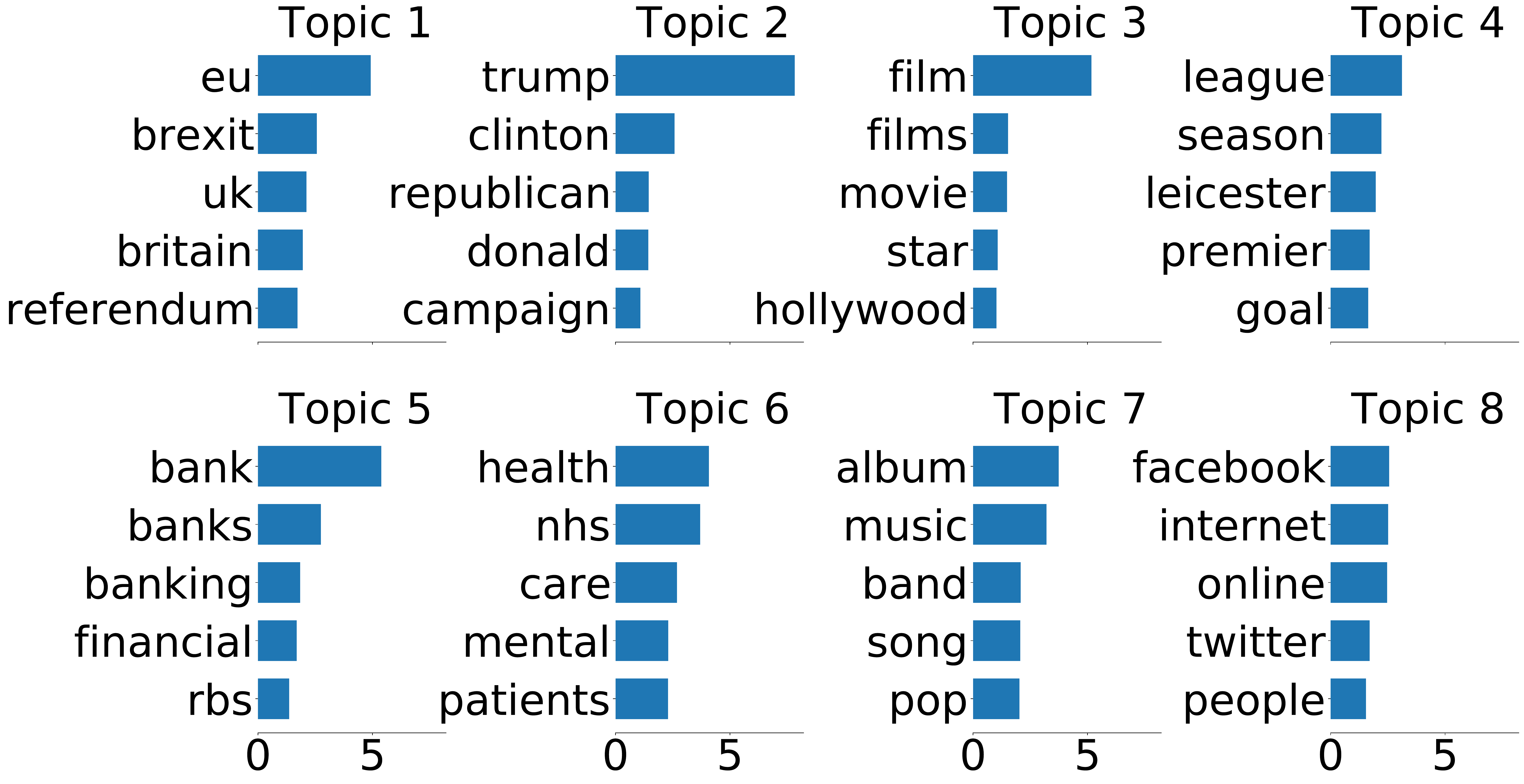}%
\caption{Non-Private Guardian News Articles}
\label{}
\end{subfigure}\hfill%
\begin{subfigure}{.95\columnwidth}
\includegraphics[width=\columnwidth]{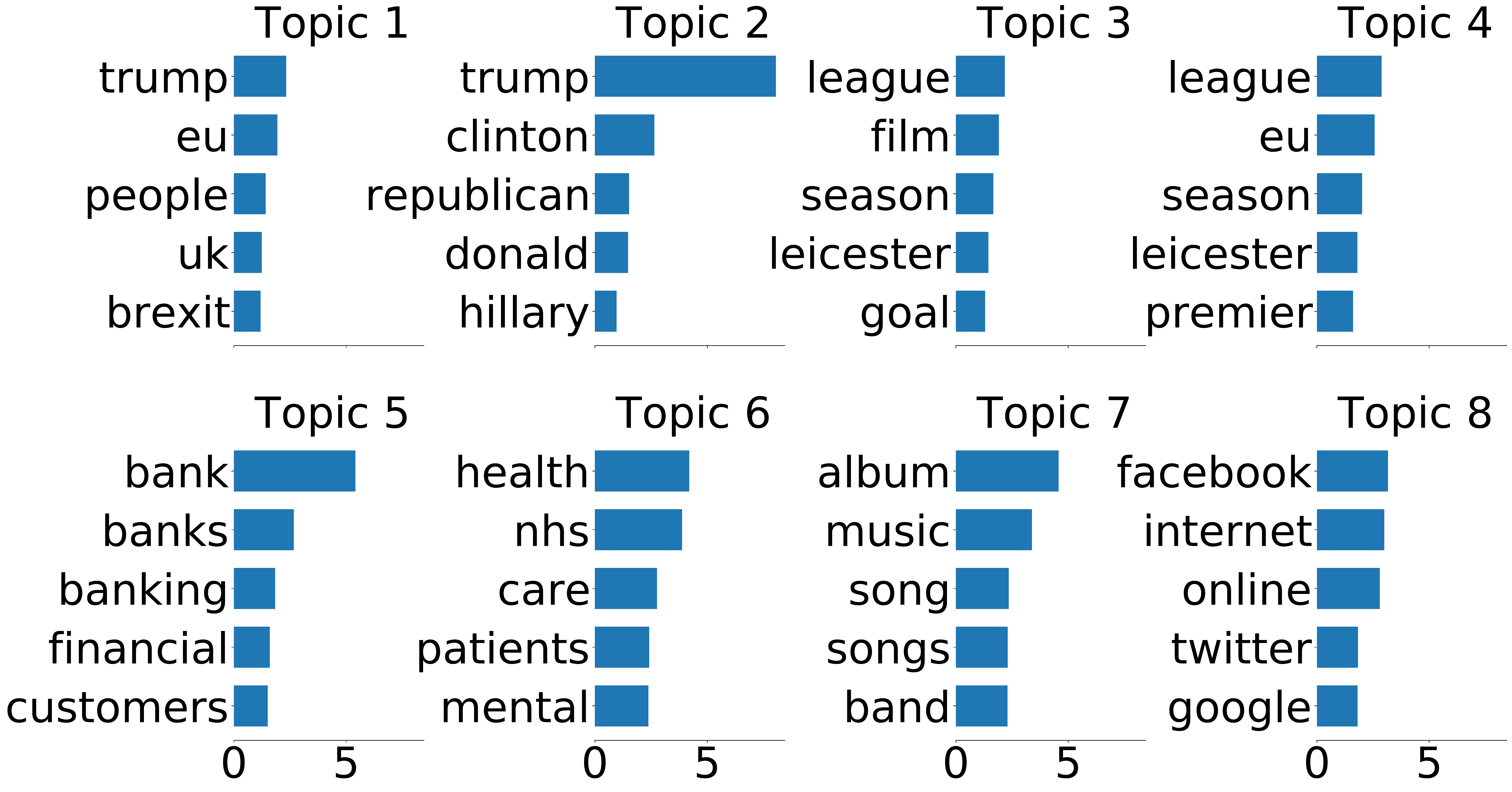}%
\caption{$(\epsilon=0.5,\delta=10^-5)$-DP Guardian News Articles}
\label{}
\end{subfigure}%

\begin{subfigure}{.95\columnwidth}
\includegraphics[width=\columnwidth]{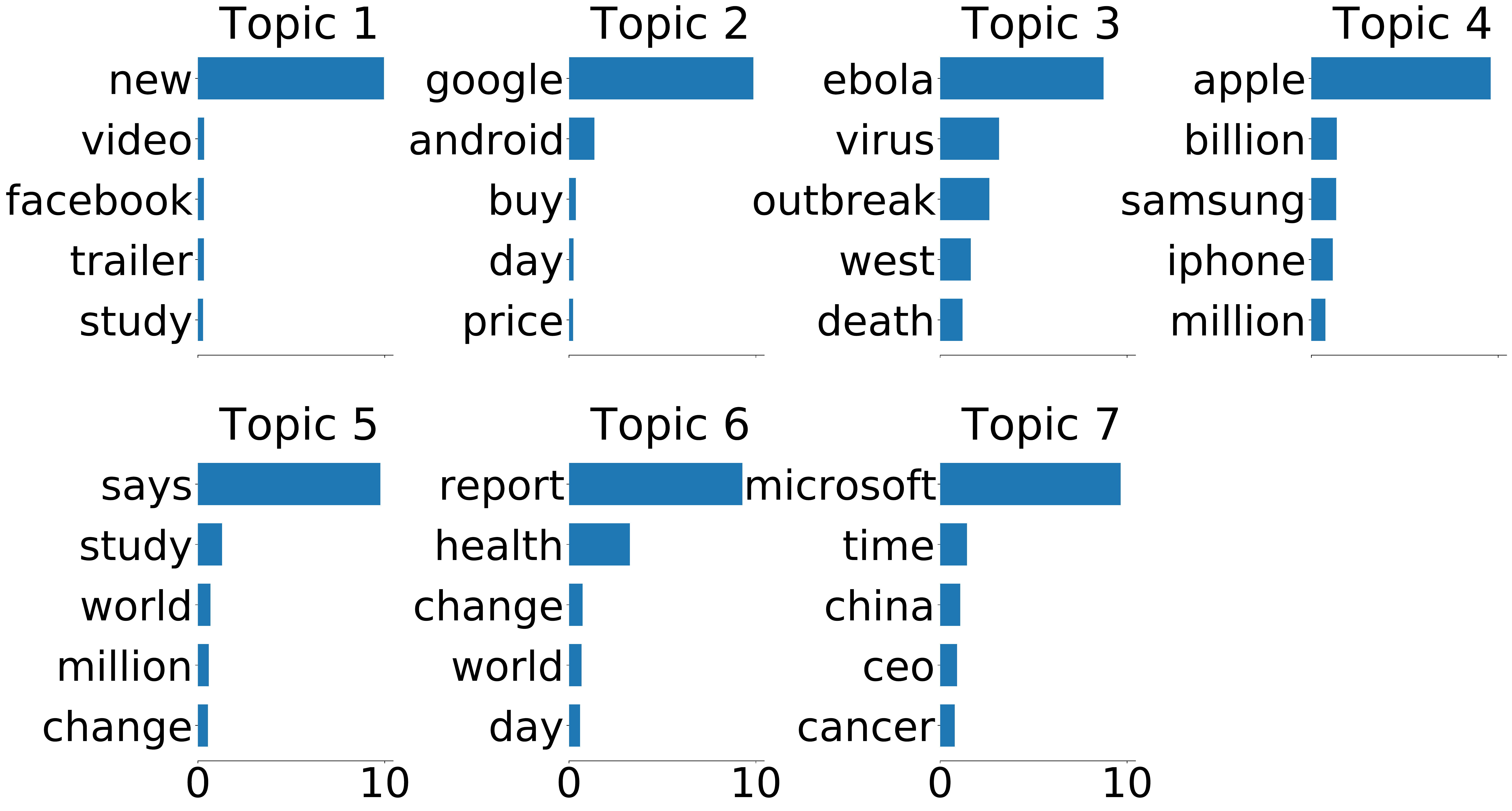}%
\caption{Non-Private UCI-news-aggregator}
\label{}
\end{subfigure}\hfill%
\begin{subfigure}{.95\columnwidth}
\includegraphics[width=\columnwidth]{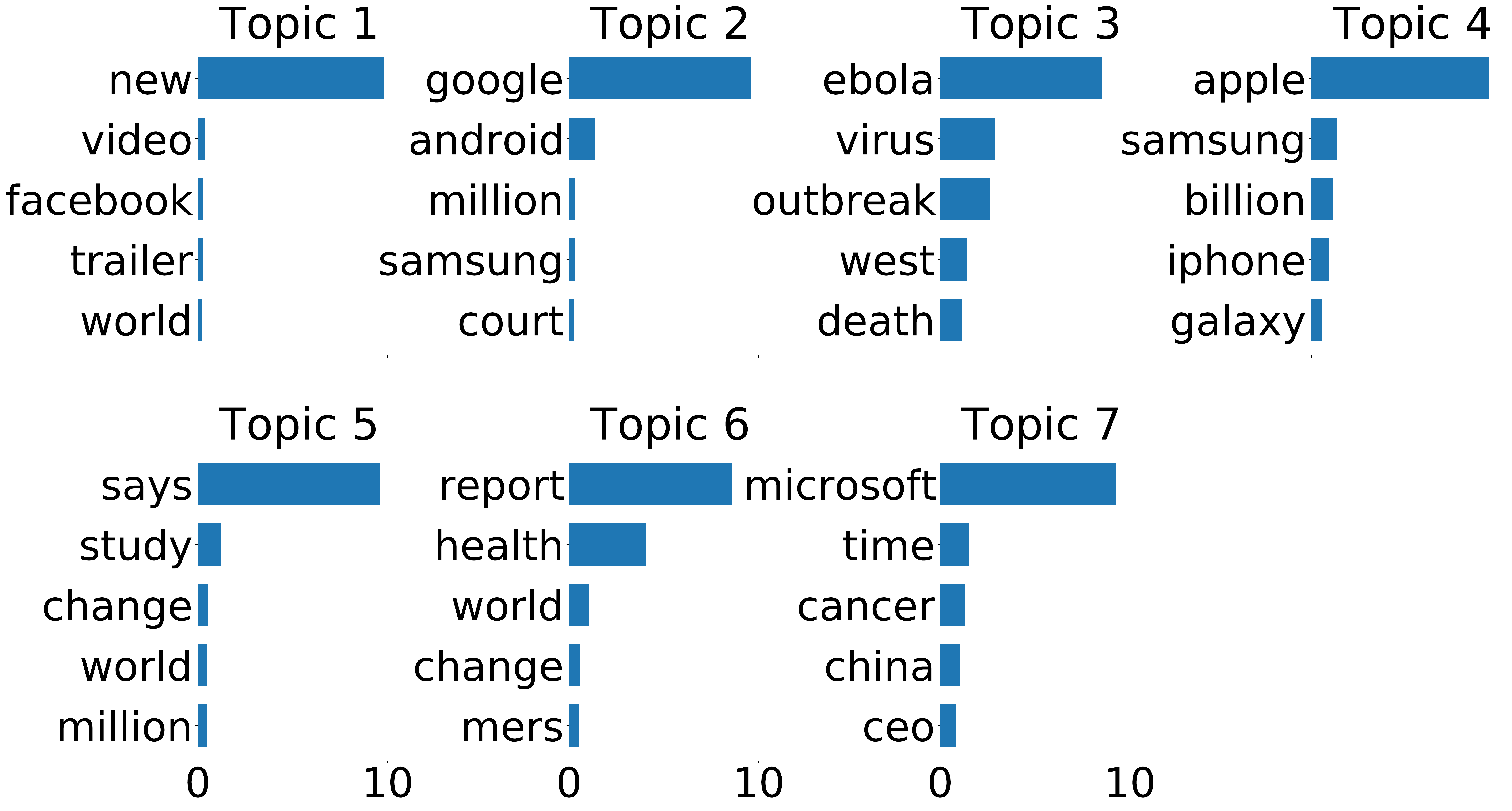}%
\caption{$(\epsilon=0.5,\delta=10^-5)$-DP UCI-news-aggregator}
\label{}
\end{subfigure}%

\caption{Topic Word Comparison}
\label{fig:topic_word_text}
\end{figure*}

\begin{table}[t]
    \centering
    \begin{tabular}{c c c}
   \hline   $\epsilon$ & Guardian News Articles & UCI-news-aggregator \\
  \hline
   0.5 & 0.4511 & 0.9996975 \\
%   \hline 
   0.6 & 0.4529 & 0.9996895 \\
%   \hline
   0.7 & 0.4535 & 0.9996849 \\
%   \hline 
   0.8 & 0.4547 & 0.9996836\\
%   \hline 
   0.9 & 0.4543 & 0.9996832\\
%   \hline
   0.999 & 0.4540 & 0.9996839 \\
%   \hline
   Non-Private & 0.4658 & 0.9996816 \\
   \hline 
\end{tabular}
    
    \caption{Comparison Average Coherence Score}.
    \label{tab:compare_coherence_text}
\end{table}

With the text data set, we apply the topic modeling algorithm. Topic modeling is a statistical algorithm that identifies the abstract topics present in the set of documents. A document generally contains multiple topics with different proportions. Suppose a document is said to be ``80\% about religion and 20\% percent about politics”. In that case, it means about 80 percent of words related to religion and 20 percent related to politics. The topic modeling algorithm tries to find the unique “cluster of words” that indicates one single topic from the document. We discuss more about it and its implementation in Appendix \ref{topic_modeling}.

We evaluate our proposed method by showing the learning curve with respect to variable privacy budget $\epsilon$ in each iteration, and topic word distribution. We also calculate the overall $\epsilon$ using RDP calculation and show the comparison of objective value with that of the non-private mechanism to select optimum $\epsilon_i$ in each iteration. For two data sets, we compared the topic word distribution and average coherence score between non-private and private algorithm.  

Here is a short description of each text data sets. 

\noindent\textbf{Guardian News Articles.} This data set consists of 4551 news articles collected from Guardian News API in 2006. The detailed mechanism of collecting the articles is described in this paper \cite{o2015analysis}. Here we extract eight distinguished topics ($K=8$) from the dataset and show the high-scoring word distributions corresponding to the topics.

\noindent\textbf{UCI News Aggregator Dataset.} This data set \cite{Dua:2019} is formed by collecting news from a web aggregator from 10-March-2014 to 10-August-2014. There is a total of 422937 news articles in the data set. The topics covered in the news articles are entertainment, science and technology, business, and health. We take 750 news articles from each category and apply the NMF algorithm.

\noindent\textbf{RCV1.} Reuters Corpus Volume I (RCV1) \cite{RCV1} archive consists  of over 800,000 manually categorized news wires. For our experiment, we randomly select approximately $\frac{1}{10}$-th of the features that contain 9625 documents. 

\noindent\textbf{TDT2.} The TDT2 \cite{TDT2} text database contains 9394 documents of size $9394 \times 36771$-dimensional matrix. Here, we randomly select $\frac{1}{10}$-th of the features.

\noindent\textbf{Utility Comparison on Text Data Set} Fig. \ref{fig:utility_text} shows the utility gap between private and non-private mechanism's output for the text data sets. For all the data sets, there exists a little utility gap, and this gap decreases further for a higher privacy budget $\epsilon$. Comparing the convergence speed, private learning needs more epochs to reach the optimum loss point. This is because we have to keep the learning rate lower in private learning. Also, the noisy gradient can be responsible to reach the optimum loss point lately.

\noindent\textbf{Average Topic Coherence Score} Table \ref{tab:compare_coherence_text} shows the average topic coherence score comparison for Guardian News Articles and UCI-news-aggregator data sets.
Topic coherence score measures quantitatively how the topic modeling algorithm performs. In short, topic coherence attempts to represent human intractability through a mathematical framework by measuring the semantic link between high-scoring words. The greater the coherence score, the more human-interpretable the “cluster of words” is. It is also used to tune the hyper-parameter, topic number $K$. We discuss about topic coherence in more detail in Appendix \ref{topic_modeling}.

As Table \ref{tab:compare_coherence_text} shows, topic coherence score increases with increasing the privacy budget $\epsilon$ for Guardian News Articles data set, which is self-explanatory. For UCI-news-aggregator data set, all the scores are very close to each other in respect to privacy budget. The almost equal optimum loss point for private and non-private learning in Fig. \ref{fig:uci_utility} also justifies this high similarity coherence score.

\noindent\textbf{Overall $\epsilon$ on Text Data Set} Fig. \ref{fig:rdp_text} shows the overall $\epsilon$ and also the utility gap between private and non-private mechanisms after reaching the optimum solution. This result helps to decide how much one has to introduce privacy budget $\epsilon_i$ in each stage of iteration, considering the utility gap and overall $\epsilon$.

\noindent\textbf{Topic Word Comparison} Fig. \ref{fig:topic_word_text} shows the topic word comparison for private and non-private algorithm. We choose two data sets to show this comparison-UCI-news-aggregator and Guardian News Articles. The topic word distribution looks almost similar for private and non-private mechanism, which justifies the high similarity of coherence scores (Table \ref{tab:compare_coherence_text}).

\subsection{Face Image Data Set}

\begin{figure*}[t]%
\centering
\begin{subfigure}{.95\columnwidth}
\includegraphics[width=\columnwidth]{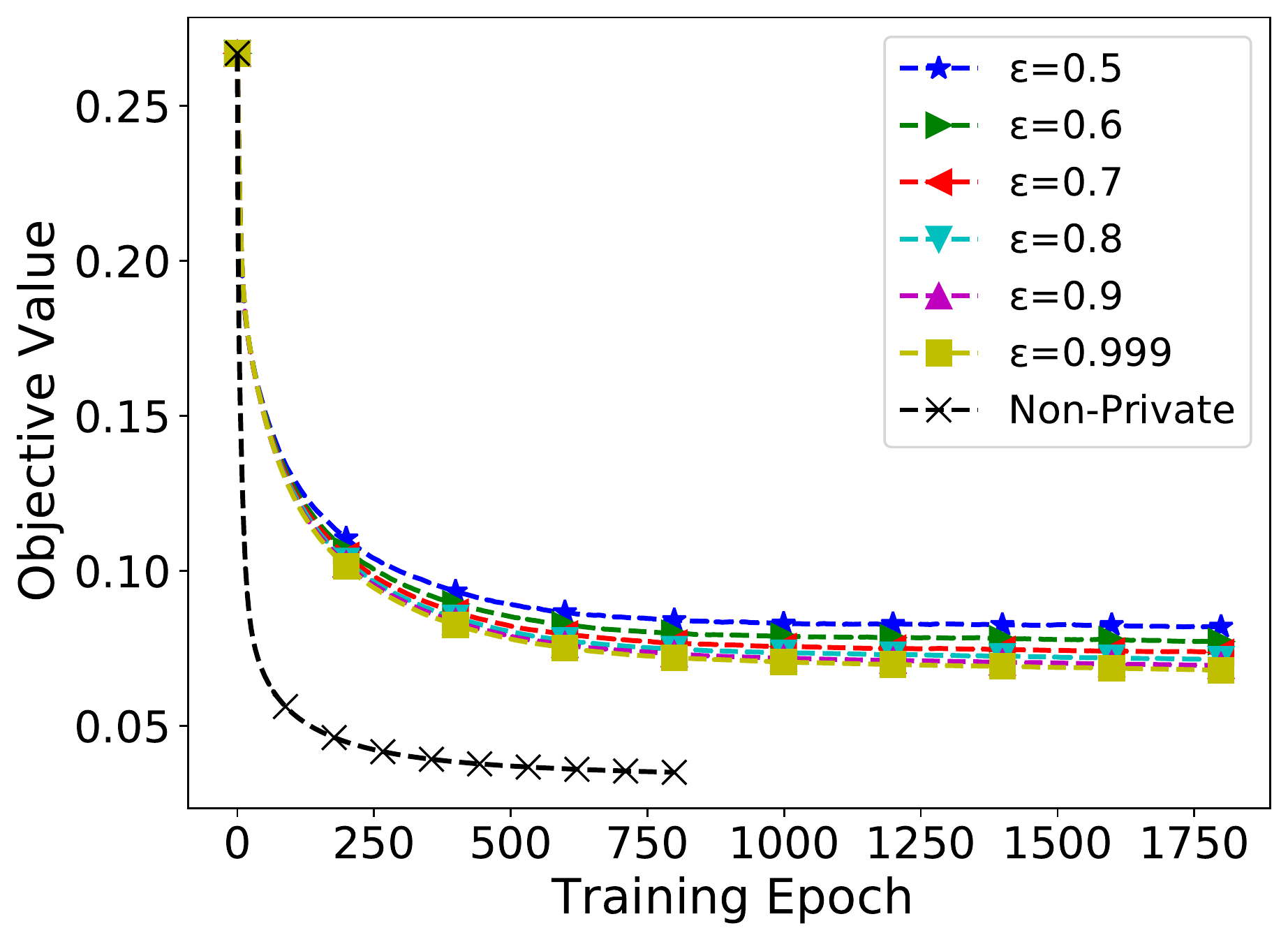}%
\caption{Yaleb Data Set}
\label{fig:Yaleb}
\end{subfigure}\hfill%
\begin{subfigure}{.95\columnwidth}
\includegraphics[width=\columnwidth]{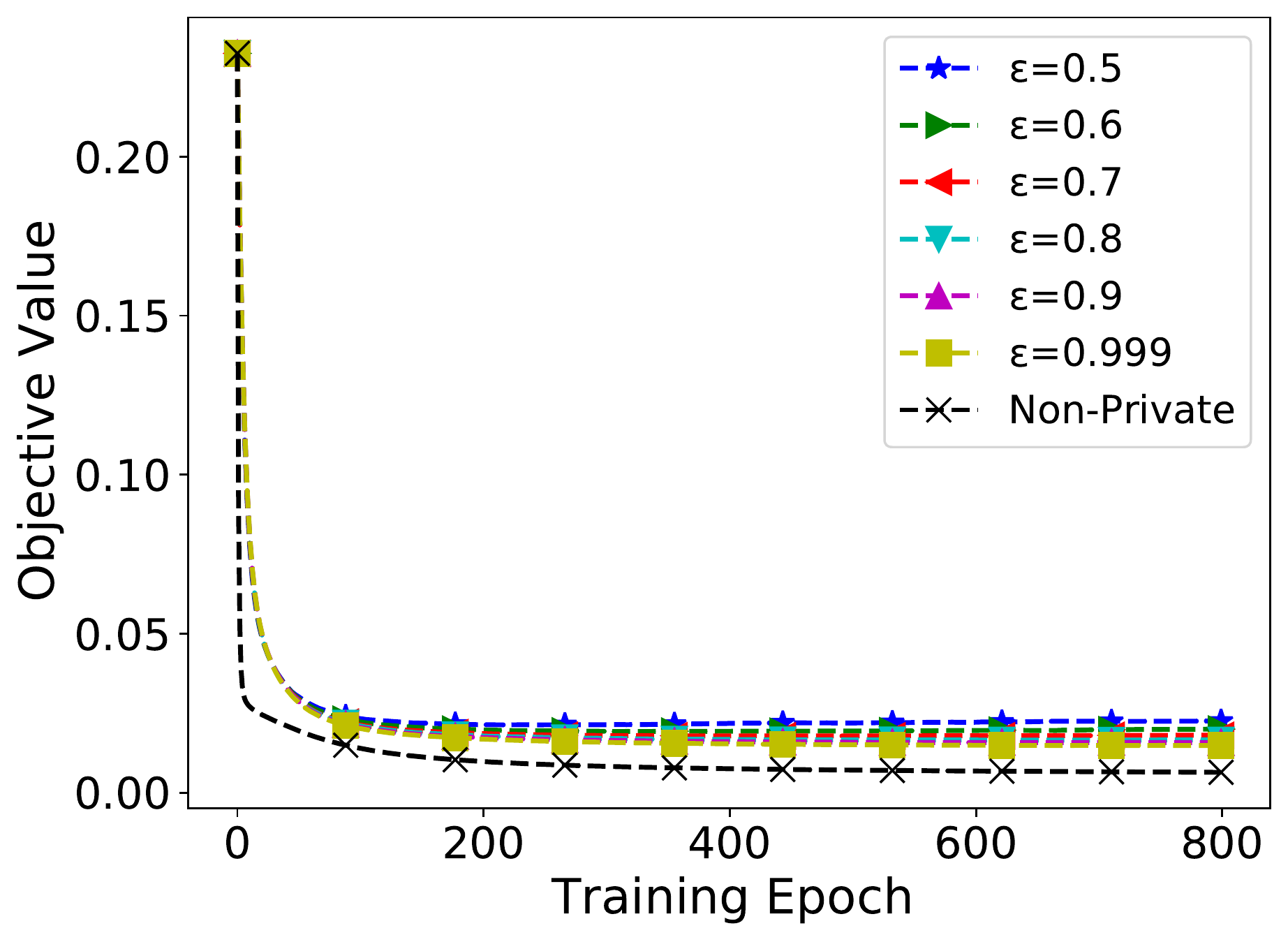}%
\caption{CBCL Data Set }
\label{fig:CBCL}
\end{subfigure}%

\caption{Utility Comparison on Face Image Data Set}
\label{fig:utility_face}
\end{figure*}

\begin{figure*}[t]%
\centering
\begin{subfigure}{.95\columnwidth}
\includegraphics[width=\columnwidth]{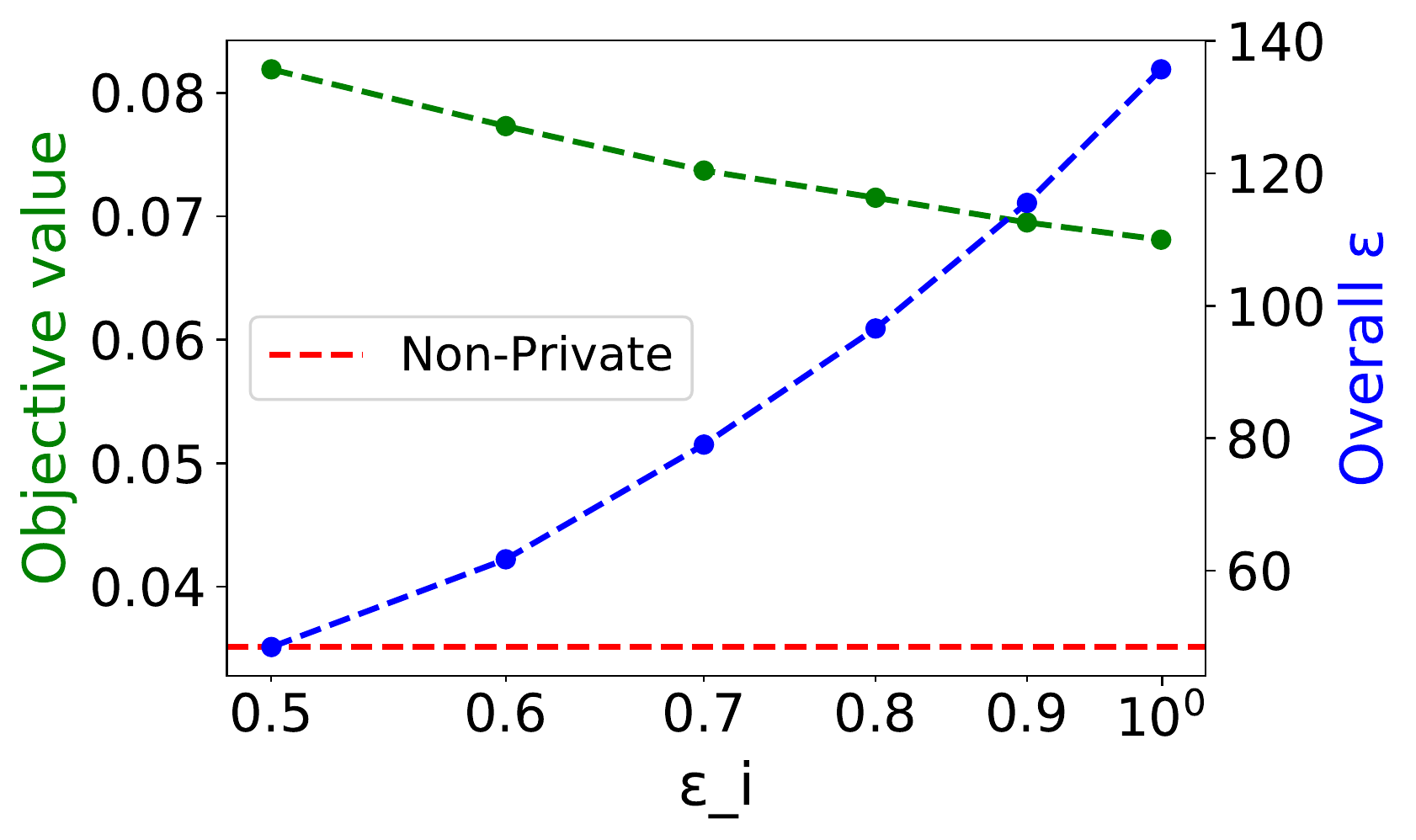}%
\caption{Yaleb Dataset}
\label{fig:rdp_yaleb}
\end{subfigure}\hfill%
\begin{subfigure}{.95\columnwidth}
\includegraphics[width=\columnwidth]{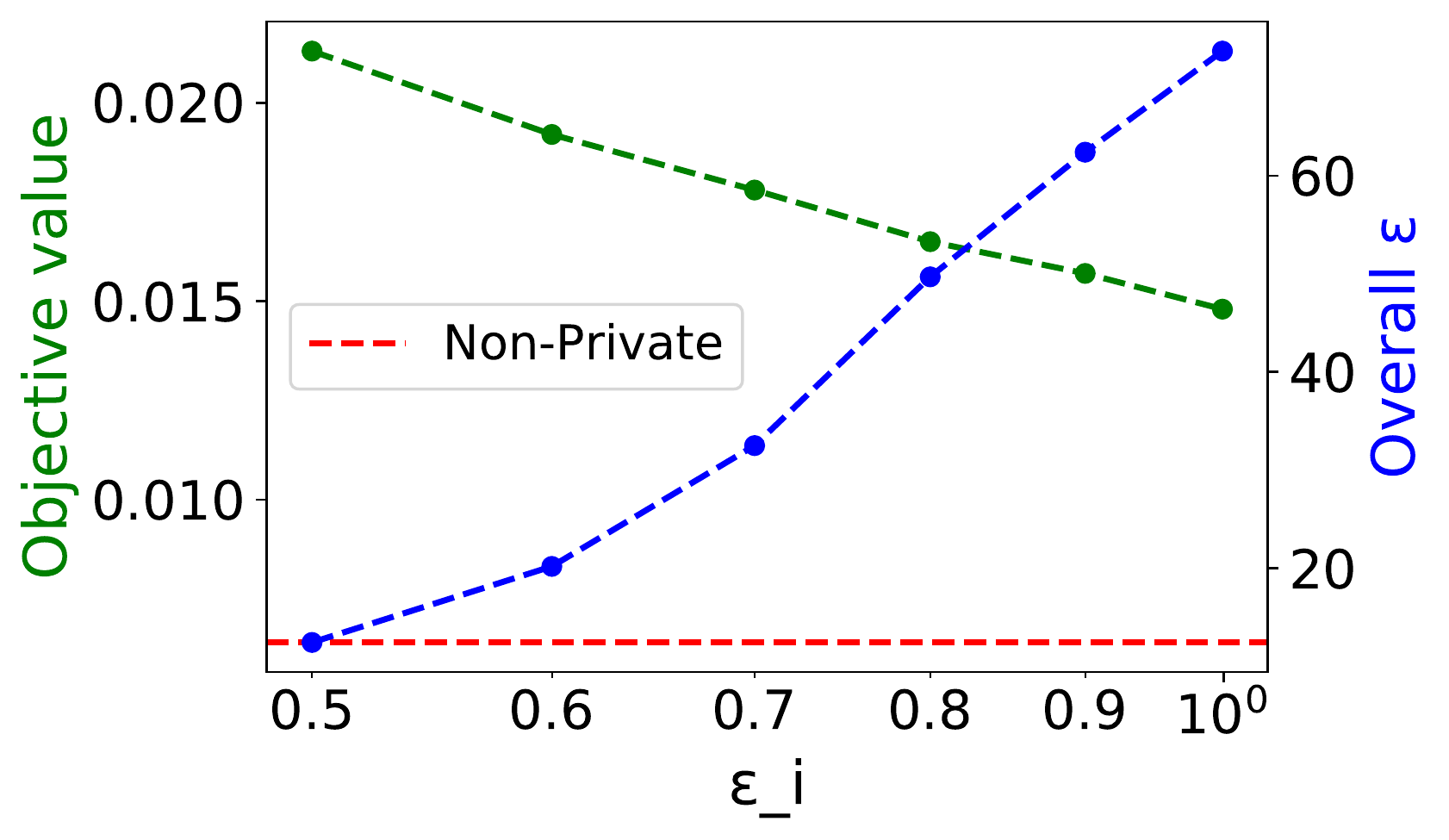}%
\caption{CBCL Dataset}
\label{fig:rdp_cbcl}
\end{subfigure}%

\caption{ Overall $\epsilon$ and Objective Value on Face Image Data Set}
\label{fig:rdp_face}
\end{figure*}

\begin{figure*}[t]%
\centering
\begin{subfigure}{.95\columnwidth}
\includegraphics[width=\columnwidth]{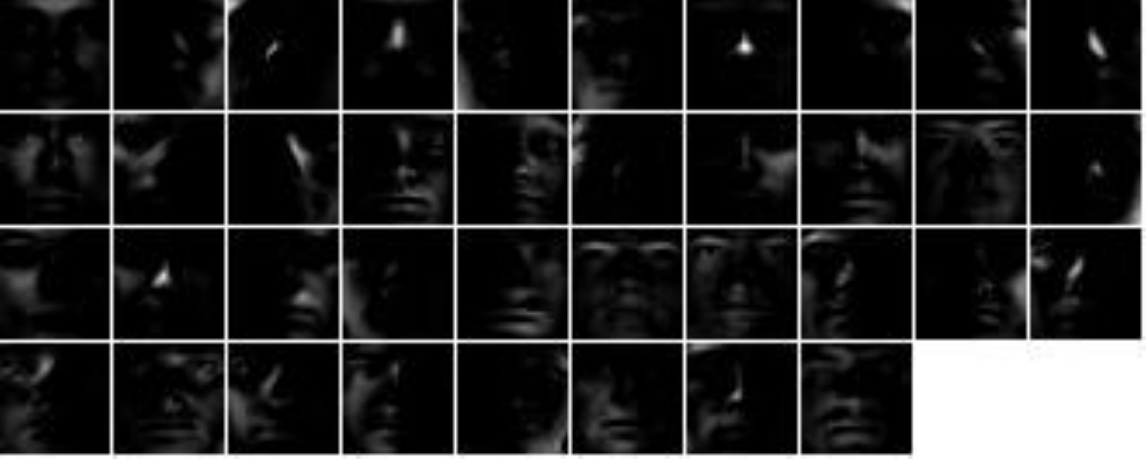}%
\caption{Non-Private Yaleb Data Set}
\label{}
\end{subfigure}\hfill%
\begin{subfigure}{.95\columnwidth}
\includegraphics[width=\columnwidth]{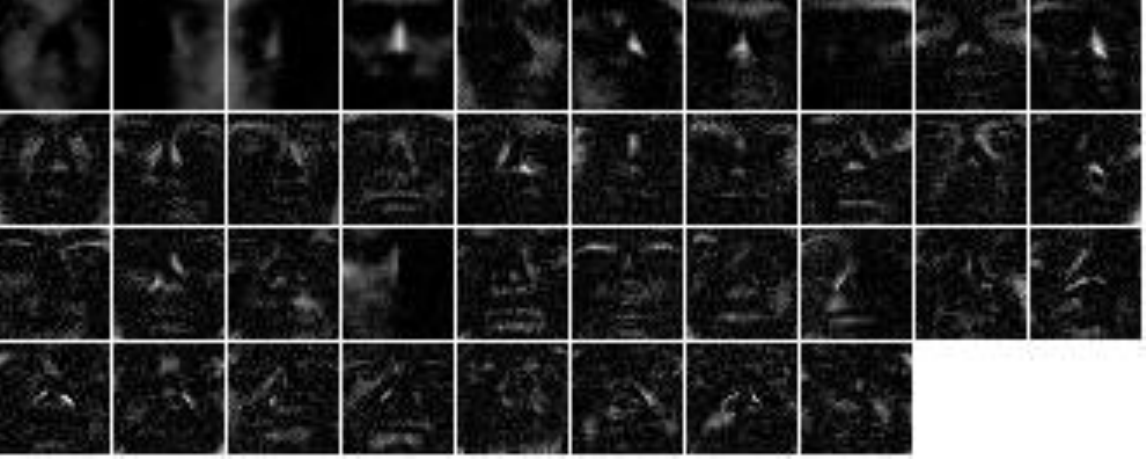}%
\caption{$(\epsilon=0.5,\delta=10^-5)$-DP Yaleb Data Set}
\label{fig:yaleb_private}
\end{subfigure}%

\begin{subfigure}{.95\columnwidth}
\includegraphics[width=\columnwidth]{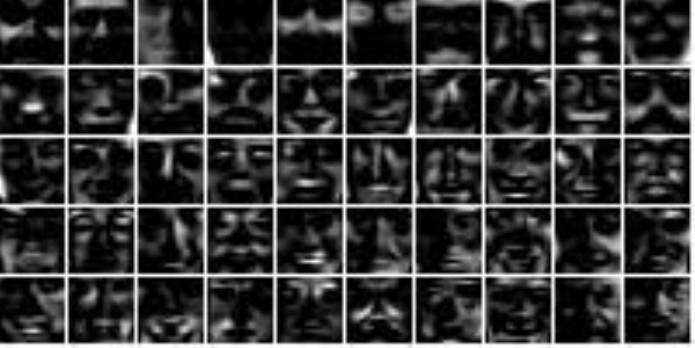}%
\caption{Non-Private CBCL Data Set}
\label{}
\end{subfigure}\hfill%
\begin{subfigure}{.95\columnwidth}
\includegraphics[width=\columnwidth]{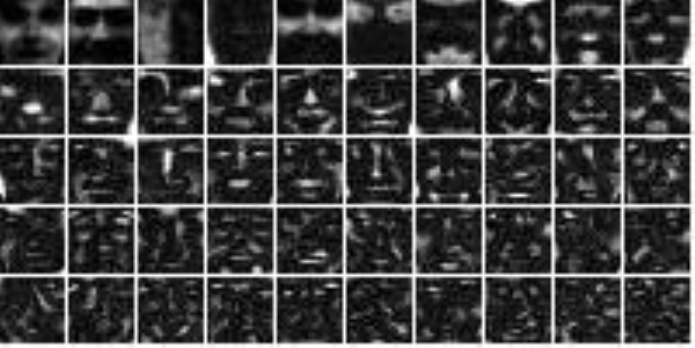}%
\caption{$(\epsilon=0.5,\delta=10^-5)$-DP CBCL Data Set}
\label{}
\end{subfigure}%

\caption{Basic Representation Comparison}
\label{fig:basic_represent_face}
\end{figure*}

With the face image dataset, we generate the fundamental facial feature by which one can reconstruct all the face images of the dataset. The details of the implementation are discussed in Appendix \ref{face_image_nmf}. Like the text dataset, we also show the overall $\epsilon$ and utility comparison to select $\epsilon_i$ for each iteration. Additionally, as the effect of outliers is much visible and common in practice for image data, we conduct our experiments with additional noise outlier dataset. Short description of each text data sets are given below: 

\noindent\textbf{YaleB.}  There are 2414 face images in Yaleb \cite{YALEB} of size $32 \times 32$. The sample images are captured in different light conditions. There are 38 subjects (male and females) in the data set. 

\noindent\textbf{CBCL.} The CBCL \cite{CBCL} database contains 2429 face images of size $19\times 19$. The facial photos consist of $19 \times 19$ hand-aligned frontal shots. Each face image is processed beforehand. The grey scale intensities of each image are first linearly adjusted so that the pixel mean and standard deviation are equal to 0.25, and then clipped to the range [0, 1].

\noindent\textbf{Utility Comparison on Face Image Data Sets} Fig. \ref{fig:utility_face} shows the learning curve of private and non-private mechanism for the face image data sets. All the characteristics of the simulation result are similar to the text data sets result. The utility gap is very small and is even smaller for higher privacy budget ($\epsilon$). 

\noindent\textbf{Overall $\epsilon$ on Face Image Data Sets} Fig. \ref{fig:rdp_face} shows the overall $\epsilon$ and utility gap after reaching the optimum loss point. Based on the criteria of preserving privacy as well as the tolerance of utility gap, one can select how much privacy budget $\epsilon_i$ one needs to introduce in each iteration. 

$\noindent\textbf{Basic Representation Comparison}$ Fig. \ref{fig:basic_represent_face} shows how algorithm learns the fundamental representation of face image under privacy and non-privacy mechanism. In the case of ($\epsilon=0.5,\delta=10^-5$)-DP Private, the facial features are noisy compared to the non-privacy mechanism. However, they can still generate the interpretable human facial feature

$\noindent\textbf{Data Set with Outlier}$ We also performed experiments to demonstrate the effect of outliers. We contaminated the Yaleb dataset with outliers, as mentioned in \cite{zhao2016online}. In short, we randomly chose 10\% of user data from the dataset, and then we contaminated 70\% of the pixel with uniform noise distribution noise $\mathcal{U}[-1,1]$. The simulation results are shown in the Fig. \ref{fig:outlier_face_yaleb}. In Section \ref{proposed_method}, it has been mathematically shown that the $\mathcal{L}_2$ sensitivity of matrix $\matr{B}$ is double when we allow for updating the outlier matrix $\matr{R}$. Higher $\mathcal{L}_2$ sensitivity gives more noise to ensure the privacy budget ($\epsilon,\delta$) we demand. Thus the basic representation of Fig. \ref{fig:yaleb_basic_outlier} is more noisy compared to the Fig. \ref{fig:yaleb_private}.

\begin{figure*}[t]%
\centering
\begin{subfigure}{.95\columnwidth}
\includegraphics[width=\columnwidth]{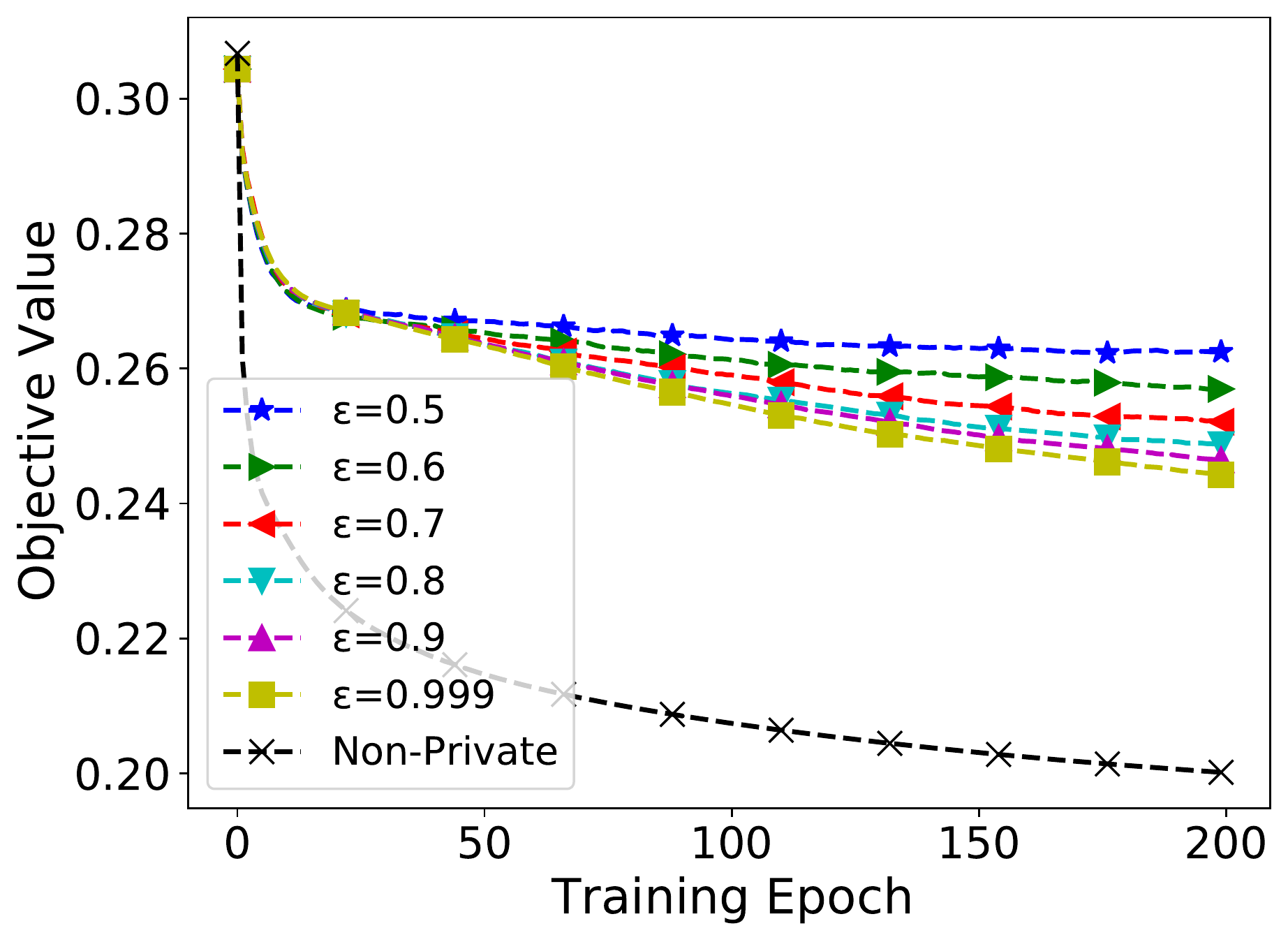}%
\caption{Utility comparison}
\label{}
\end{subfigure}\hfill%
\begin{subfigure}{.95\columnwidth}
\includegraphics[width=\columnwidth]{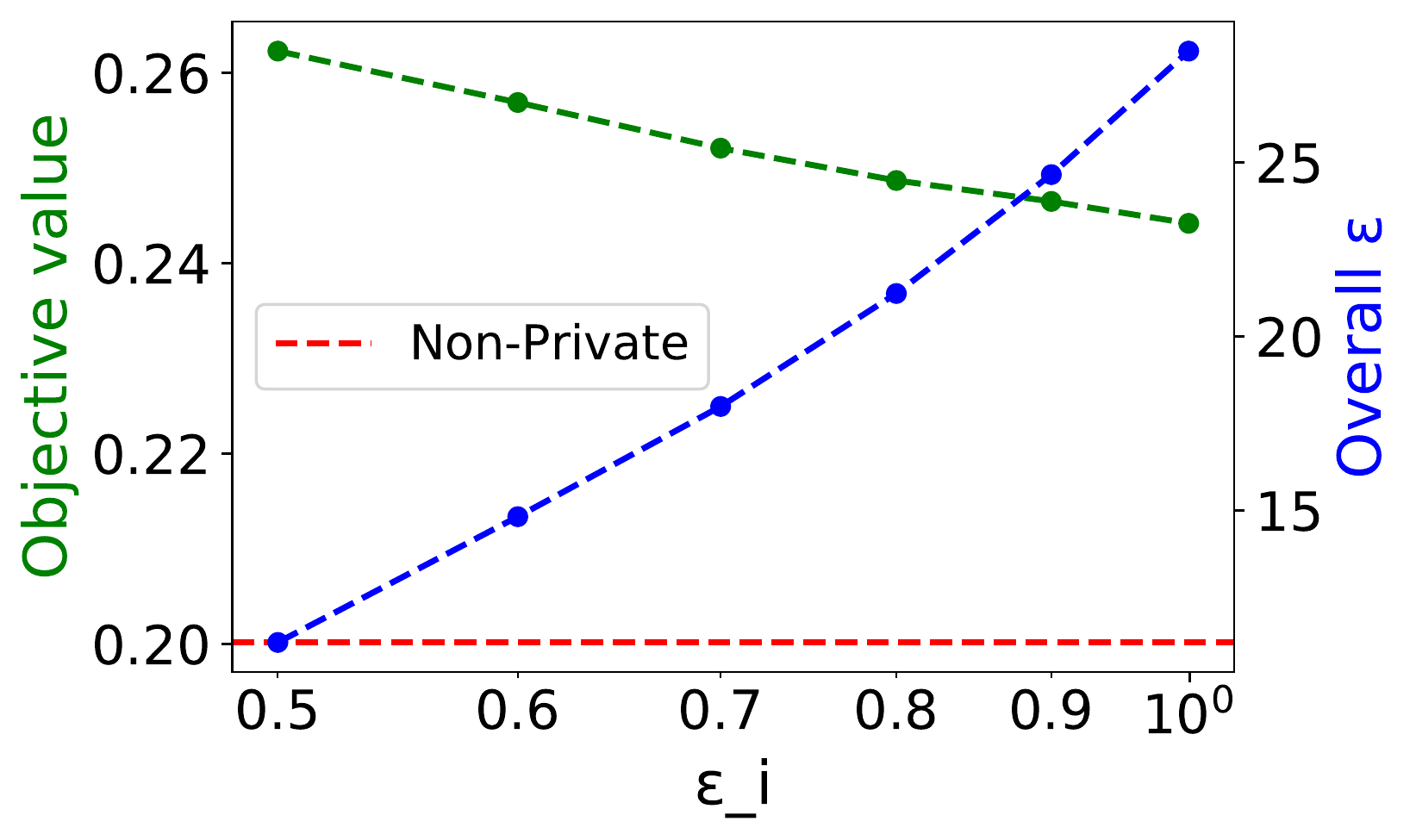}%
\caption{ Overall $\epsilon$ and Objective Value}
\label{}
\end{subfigure}%

\begin{subfigure}{.95\columnwidth}
\includegraphics[width=\columnwidth]{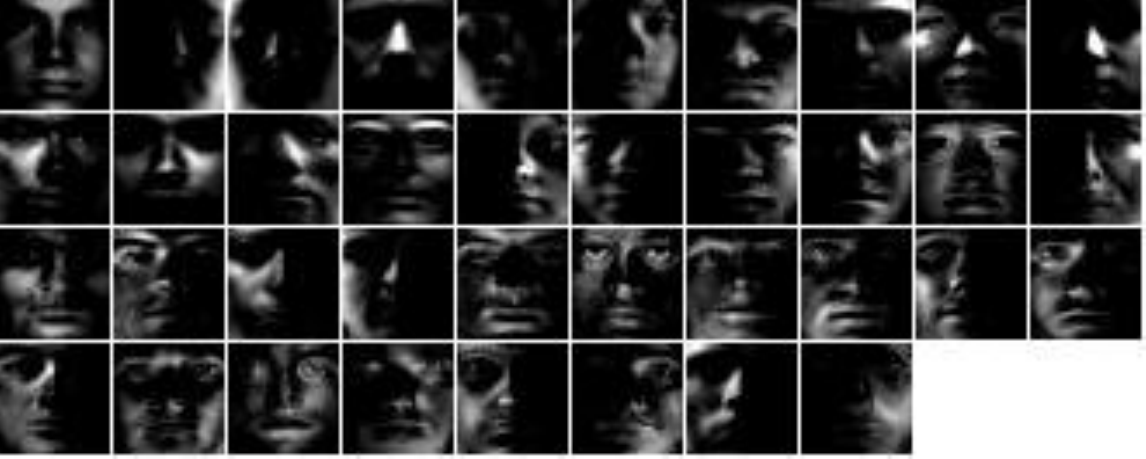}%
\caption{Basic Representation on Non-Private}
\label{}
\end{subfigure}\hfill%
\begin{subfigure}{.95\columnwidth}
\includegraphics[width=\columnwidth]{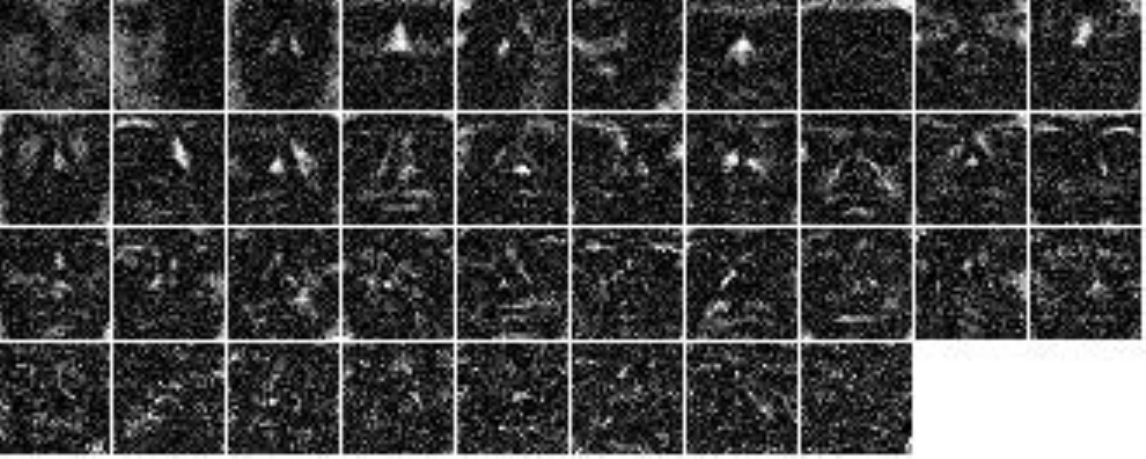}%
\caption{Basic Representation on $(\epsilon=0.5,\delta=10^-5)$-DP Private}
\label{fig:yaleb_basic_outlier}
\end{subfigure}%
\caption{Yaleb Data Set with Outlier}
\label{fig:outlier_face_yaleb}
\end{figure*}

\subsection{Comparison}
\label{Comparison}

\begin{table}[t]
    \centering
    \begin{tabular}{c c c}
         \hline
         Privacy Budget & Ours & DPNMF \\ 
         \hline
         Non-private & $0.0624$ & $0.8568$ \\
         $\epsilon=0.3$ & $0.0675$ & $1.1953$  \\
         $\epsilon=0.5$ & $0.0648$ & $1.0785$   \\
         $\epsilon=0.7$ & $0.0640$ & $ 1.0155$   \\
         \hline
               
    \end{tabular}
    \caption{RMSE comparison on MovieLens 1M Dataset}
    \label{tab:simultion_compare}
\end{table}

Compared with existing work, this work \cite{ran2022differentially} proposed the Differential Private NMF algorithm (DPNMF) only for recommender systems using the Laplacian mechanism. On the contrary, our proposed method works for any part-based learning NMF-based tasks. Moreover, the Laplacian mechanism follows $\mathcal{L}_1$ sensitivity to add noise, in contrast to $\mathcal{L}_2$ sensitivity. In our context and many machine learning tasks where we need to add noise to vectors with many elements, $\mathcal{L}_2$ sensitivity is much smaller than $\mathcal{L}_1$ sensitivity \cite{near_abuah_2021}. Furthermore, their work fails to calculate the $\mathcal{L}_1$ sensitivity of the desired objective function directly, whereas we can precisely calculate the $\mathcal{L}_2$ sensitivity of the gradient function. Moreover, they used the alternating non-negative least square algorithm (ANLS) \cite{gillis2011nonnegative} for their base NMF approach, where there is no mechanism to remove the outlier effect. 

We also performed simulations for comparison on the MovieLens 1M Dataset \cite{harper2006movielens}, and defined a similar evaluation metric, RMSE, as in~\cite{ran2022differentially}: $\text{RMSE} = \frac{1}{\sqrt{N}} \norm{\matr{V}-\matr{ \hatv V} \odot \matr{X}}_2$,
% \begin{equation} \label{rmse}
%     \text{RMSE} = \frac{1}{\sqrt{N}} \norm{\matr{V}-\matr{ \hatv V} \odot \matr{X}}_2,
% \end{equation}
where $\matr{V} \in \mathcal{R}_{+}^{U \times I}$ is the user-item matrix, $\matr{\hatv V}$ and $\matr{X}$, same shape as $\matr{V}$, are the predicted user-item matrix and the observation mask respectively, and $N$ is the number of user-item pairs. Each entry $v_{ui}$ in user-item matrix denotes how much a user $u \in U$ gives rating to an item $i \in I$. Each entry $x_{ui}$ of  observation mask matrix is set to 1 if user $u$ has rated the item $i$, and 0 otherwise. The findings of the simulation comparison are provided in Tab. \ref{tab:simultion_compare}. The comparison shows that ours method outperforms the DPNMF. 

\section {Conclusion and Future Works}
We proposed a novel privacy-preserving NMF algorithm that can learn the dictionary matrix $\matr{W}_{private}$ from the data matrix $\matr{V}$ preserving the privacy of data in any NMF-related task. Our proposed algorithm enjoys $(\epsilon,\delta)$- differential privacy guarantee with good performance results. It adds white additive noise  to the function’s output based on the privacy budget and $\mathcal{L}_2$ sensitivity following the Gaussian mechanism. The proposed algorithm shows a comparable result with the non-private algorithm. Moreover, we calculate the overall $\epsilon$ for multi-stage composition cases using R\'enyi Differential Privacy. The overall $\epsilon$ along with the utility gap plot can give control to select  $\epsilon_i$ in each epoch. One can choose a privacy budget $\epsilon_i$ for each iteration stage depending on how much utility gap one can tolerate and how much privacy to preserve one wants.
We experimentally justified our proposed method and compared the result using six real data sets. All the results show a small utility gap compared to the non-private mechanism. When comparing the learning curve, the private mechanism needs more epochs to reach the optimum loss point because we have to use a smaller step size in private learning. Also, adding noise in  the $\triangledown f_W$ calculation can disturb the training. The text data set shows the similarity between the topic word distributions. We quantitatively measure this similarity by using the Topic Coherence score. In the face image data set, we compare facial feature construction. In private learning, there exists some noise in the facial feature because of the noisy gradient of $\triangledown f_W$. However, the features can still show the fundamental facial parts of the human face. Also, our experimental results of facial decomposition show that the performance of the image data set is more sensitive to privacy noise compared to the text data set. It will be an interesting work to mitigate the noise effect on the face image data set.

Here we use the private data matrix $\matr{W}$ at the single node-data curator. However, when it is not possible to accumulate all the private data at a single node, we need to transform our mechanism into federated learning and decentralized learning \cite{wei2020federated}. It will be interesting to see how our proposed mechanism will work under the decentralized framework. Moreover, we use the offline method to implement the NMF algorithm in our implementation. Nevertheless, in the case of big data implementation, we need to focus on online learning and batch learning. It directs another possible privacy framework: privacy amplification by sub-sampling \cite{balle2018privacy}.

\ifCLASSOPTIONcompsoc
  % The Computer Society usually uses the plural form
  \section*{Acknowledgments}
\else
  % regular IEEE prefers the singular form
  \section*{Acknowledgment}
\fi
The authors would like to express their sincere gratitude towards the authorities of the Department of Electrical and Electronic Engineering and Bangladesh University of Engineering and Technology (BUET) for providing constant support throughout this research work.

% Can use something like this to put references on a page
% by themselves when using endfloat and the captionsoff option.
\ifCLASSOPTIONcaptionsoff
  \newpage
\fi

% trigger a \newpage just before the given reference
% number - used to balance the columns on the last page
% adjust value as needed - may need to be readjusted if
% the document is modified later
%\IEEEtriggeratref{8}
% The "triggered" command can be changed if desired:
%\IEEEtriggercmd{\enlargethispage{-5in}}

% references section

% can use a bibliography generated by BibTeX as a .bbl file
% BibTeX documentation can be easily obtained at:
% http://mirror.ctan.org/biblio/bibtex/contrib/doc/
% The IEEEtran BibTeX style support page is at:
% http://www.michaelshell.org/tex/ieeetran/bibtex/

\bibliographystyle{IEEEtran}
% argument is your BibTeX string definitions and bibliography database(s)
\bibliography{IEEEabrv,Bibliography}
%
% <OR> manually copy in the resultant .bbl file
% set second argument of \begin to the number of references
% (used to reserve space for the reference number labels box)
% \begin{thebibliography}{1}

% \bibitem{IEEEhowto:kopka}
% H.~Kopka and P.~W. Daly, \emph{A Guide to \LaTeX}, 3rd~ed.\hskip 1em plus
%   0.5em minus 0.4em\relax Harlow, England: Addison-Wesley, 1999.

% \end{thebibliography}

% biography section
% 
% If you have an EPS/PDF photo (graphicx package needed) extra braces are
% needed around the contents of the optional argument to biography to prevent
% the LaTeX parser from getting confused when it sees the complicated
% \includegraphics command within an optional argument. (You could create
% your own custom macro containing the \includegraphics command to make things
% simpler here.)
%\begin{IEEEbiography}[{\includegraphics[width=1in,height=1.25in,clip,keepaspectratio]{mshell}}]{Michael Shell}
% or if you just want to reserve a space for a photo:

\begin{IEEEbiography}[{\includegraphics[width=1in,height=1.25in,clip,keepaspectratio]{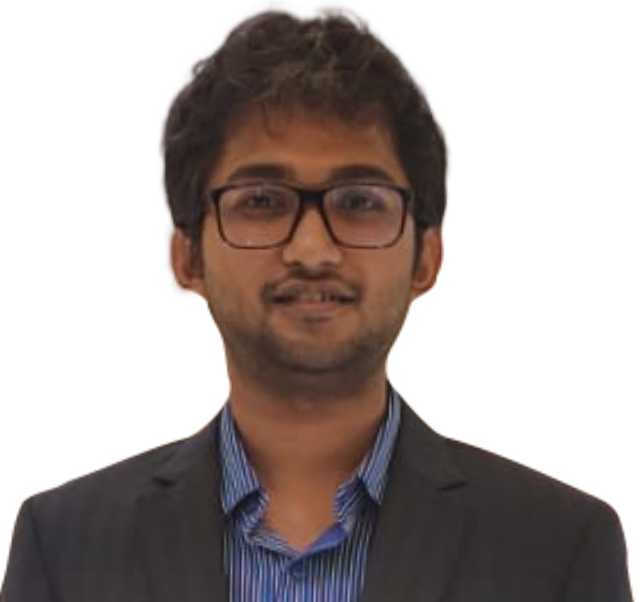}}]{Swapnil Saha}
received his B.Sc. degree from the Bangladesh University of Engineering and Technology (BUET), Dhaka, Bangladesh in 2022. His research interest focuses on math-driven problems requiring efficient solutions that include but are not limited to optimization, information theory, and privacy-preserving machine learning. Currently, he is doing research on distributed machine learning algorithm with a focus on data privacy.
\end{IEEEbiography}

\vspace{-40pt}

\begin{IEEEbiography}[{\includegraphics[width=1in,height=1.25in,clip,keepaspectratio]{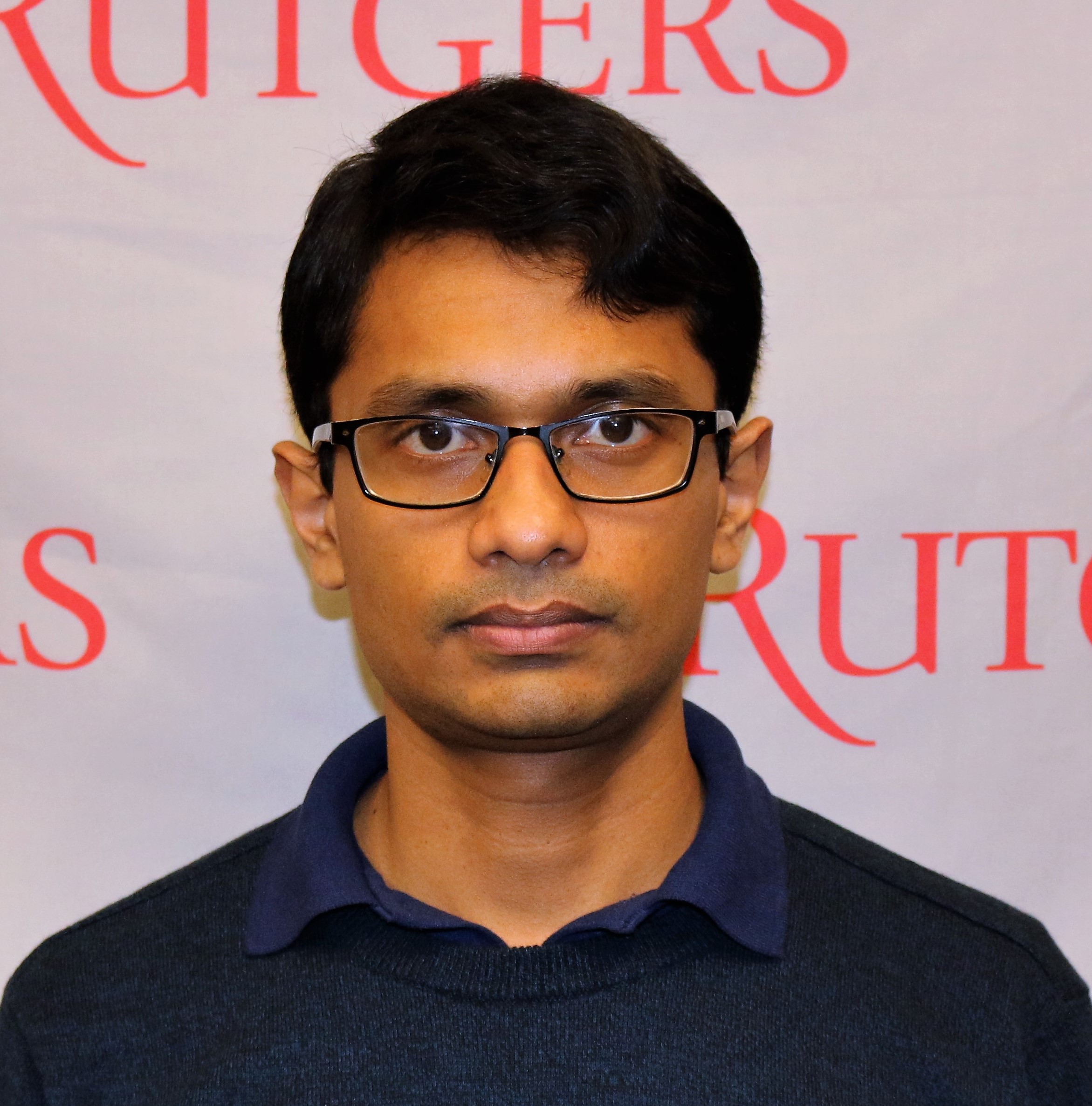}}]{Hafiz Imtiaz}
completed his PhD from Rutgers University, New Jersey, USA in 2020. He earned his second M.Sc. degree from Rutgers University in 2017, his first M.Sc. degree and his B.Sc. degree from Bangladesh University of Engineering and Technology (BUET), Dhaka, Bangladesh in 2011 and 2009, respectively. He is currently an Associate Professor with the Department of Electrical and Electronic Engineering at BUET. Previously, he worked as an intern at Qualcomm and Intel Labs, focusing on activity/image analysis and adversarial attacks on neural networks, respectively. His primary area of research includes developing privacy-preserving machine learning algorithms for decentralized data settings. More specifically, he focuses on matrix and tensor factorization, and optimization problems, which are core components of modern machine learning algorithms.
\end{IEEEbiography}

% insert where needed to balance the two columns on the last page with
% biographies
%\newpage

% You can push biographies down or up by placing
% a \vfill before or after them. The appropriate
% use of \vfill depends on what kind of text is
% on the last page and whether or not the columns
% are being equalized.

%\vfill

% Can be used to pull up biographies so that the bottom of the last one
% is flush with the other column.
%\enlargethispage{-5in}

% that's all folks

\clearpage

\appendices

\section{Topic Modeling and its implementation}
\label{topic_modeling}

\subsection{Topic Modeling}

Topic modeling is a statistical model used in statistics and natural language processing to discover the abstract “topics” that occur in a collection of documents. Topic modeling is a common text-mining technique to uncover hidden semantic structures within a text body. Given that a document is about a specific topic, one would expect certain words to appear more or less frequently: “dog” and “bone” will appear more frequently in documents about dogs, “cat” and “meow” will appear more frequently in documents about cats, and “the” and “is” will appear roughly equally in both. A document typically addresses multiple topics in varying proportions; therefore, a document that is 10 \% about cats and 90 \% about dogs would likely contain nine times as many dog words as cat words. The “topics” generated by topic modeling techniques are word clusters. A topic model encapsulates this intuition in a mathematical framework, enabling the examination of a set of documents and the identification of their potential topics and balance of topics based on the statistics of their words.

Topic models are also known as probabilistic topic models, which refer to statistical algorithms for identifying the latent semantic structures of a large text body. In this information age, the amount of written material we encounter daily exceeds our capacity to process it. Extensive collections of unstructured text bodies can be organized and comprehended better with topic models. Originally developed as a tool for text mining, topic models have been used to detect instructive structures in data, including genetic information, images, and networks. They have applications in fields such as computer vision \cite{cao2007spatially} and  bioinformatics \cite{blei2012probabilistic}.

A text document consists of one or more topics. In the mathematical context, we can say that the linear combination of topics forms each text document. Each topic reflects its semantic meaning by some representative `cluster of words’. In topic modeling, we find these representative clusters of words from the corpus and the coefficient weights which say how much a single topic is more present than others in a single document. In the context of NMF decomposition, data matrix $\matr{V}$ contains the text documents, dictionary matrix $\matr{W}$ contains topic words, and coefficient matrix $\matr{H}$ contains the coefficient  weight.\\

\subsection{Text Pre-processing}

The first step before applying any topic modeling algorithm is to do text preprocessing. Raw documents contain textual words which need to convert into numerical form. To do so, we split each word from the document and give a unique token to each of them.

Let us say we have five documents in our corpus of documents, and there is a total of 100 unique words present in all documents. Then, after tokenizing the corpus of documents, we will form a matrix $\matr{A}$ of size $100 \times 5$. Column entry indicates the document number, and row number indicates the specific term word. If $\vect{a_{ij}}=3$ in $\matr{A}$ where $i=50,j=4$, it means that 50 'no-term word' is used 3 times in the \nth{4} document.

However, we need further preprocessing to do actual topic modeling. Intuitively all the words in a document do not contribute equal contributions to determine the topic category of this document. Besides, some high-frequency words (like articles and auxiliary verbs) and low-frequency words do not indicate a specific topic. We remove these unnecessary words and add weight to the important topic words. The first one is done easily by simple text preprocessing like maximum frequency filtering, minimum frequency filtering, and stop-word (which stores predefined high-frequency English words) filtering. To give extra weight to important topic words, we need to introduce a new mathematical framework: term frequency-inverse document frequency (TF-IDF)\cite{salton1988term} \cite{berger2000bridging}.

TF-IDF wants to calculate quantitatively how a term word is “important” to determine the nature of the specific document’s topic category. The calculation involves two steps. First, it computes the frequency of word terms in that specific document. Then it computes the frequency in all the documents. The second calculation wants to penalize if the term word is common in all documents. The equation of TF-IDF is as follows \cite{ramos2003using}
 
\begin{equation}
    w_d=f_{w,d} \times \log \big(\frac{|D|}{f_{w,D}}\big).
\end{equation}

In our implementation we use scikit-learn default function TfidfVectorizer() to produce TF-IDF normalized document-term matrix. According to our notation, TfidfVectorizer() produces matrix of size $\matr{D} \times \matr{N}$ where $\matr{D}$ is the number of word term after processing the text data and $\matr{N}$ is the total number of documents present in the corpus. Now the corpus of raw documents is ready to use the NMF algorithm.

\subsection{Implementation through NMF}

\begin{figure}[t]
    \centering
    \includegraphics[width=\columnwidth]{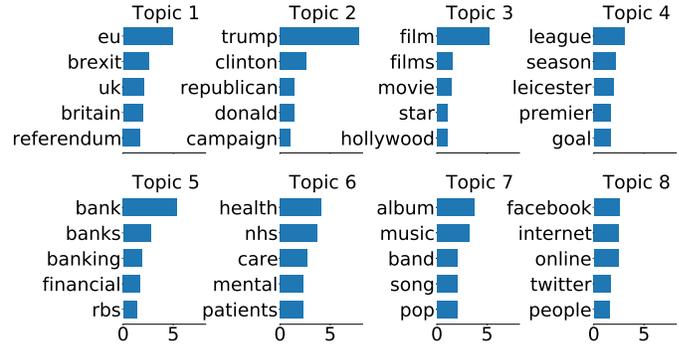}
    \caption{Topic Word Distribution of Guardian News Articles}
    \label{fig:topic_git}
\end{figure}

If we apply the NMF algorithm on the TF-IDF normalized document-term, we get two matrices: one is dictionary matrix $\matr{W}$ ($D \times K$), and the other is coefficient matrix $\matr{H}$ ($K \times N$) where $K$ is the topic number presented in the corpus. The $\matr{W}$ matrix shows the topic distribution word. We can tell about the topic category by observing the highest entry values.

Let us revisit the experimental implementation of the Guardian News Articles dataset discussed in the Section \ref{experimental_results}. We get the following topic word distribution in Fig. \ref{fig:topic_git}. Here, the eight distinguish topic word distribution indicates eight distinguish topics. Applying NMF in the topic modeling algorithm requires one important hyper-parameter selection: topic number $K$. Though here we assume the topic number $K=8$ before applying NMF, there is a systemic way to tune this hyper-parameter. This is done by measuring the Topic Coherence score. 

\subsection{Topic Coherence}

\begin{figure}[t]
    \centering
    \includegraphics[width=0.95\columnwidth]{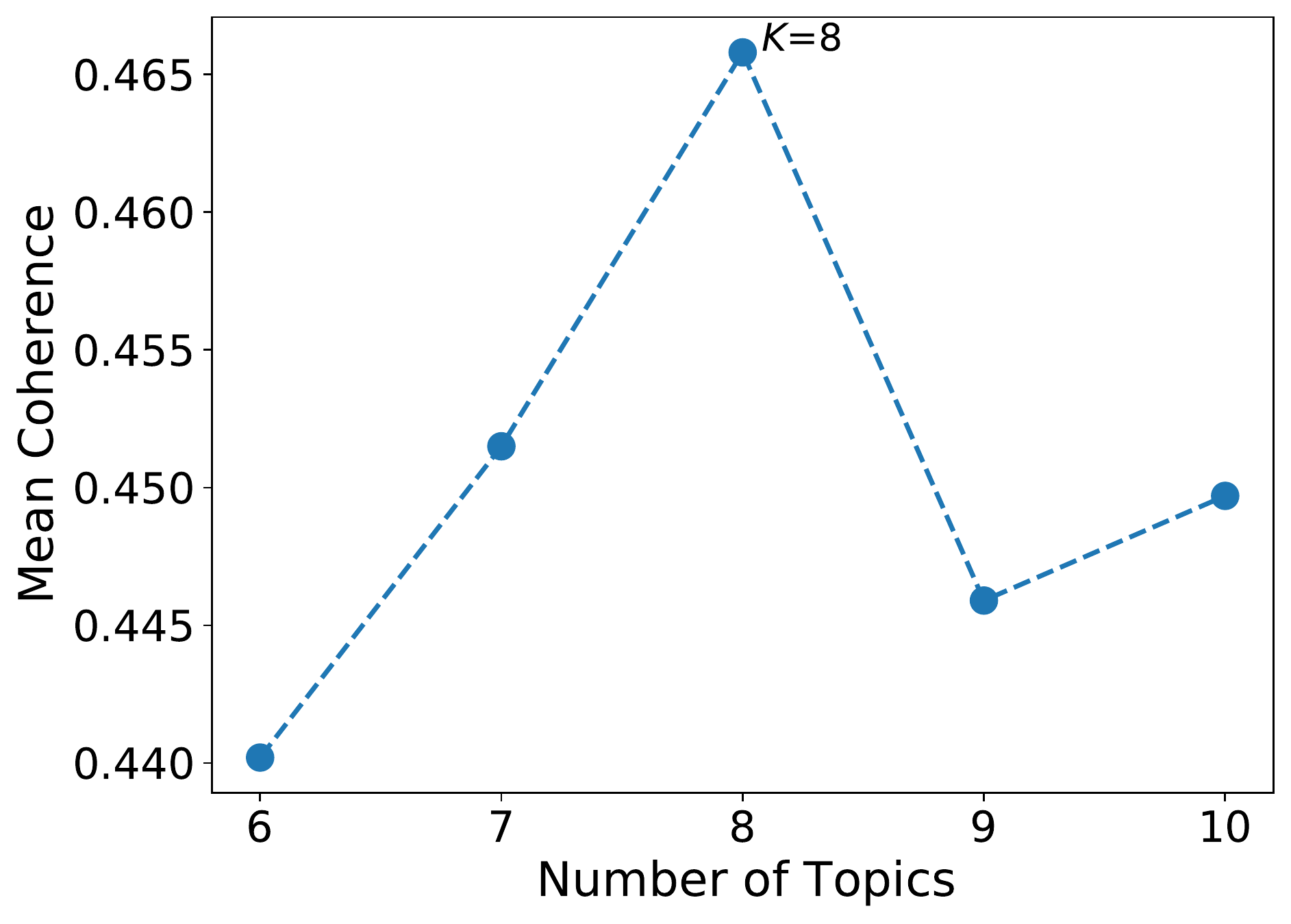}
    \caption{Mean Coherence vs Number of Topics}
    \label{fig:mc_topics}
\end{figure}

Topic Coherence measures the semantic similarity between high-scoring words. These measurements help to differentiate between semantically interpretable topics and those that are statistical artifacts of inference. There are numerous methods for measuring coherence, such as NPMI, UMass, TC-W2V, etc \cite{o2015analysis}. Our study uses the TC-W2V method to measure the coherence score. 

Fig. \ref{fig:mc_topics} shows the comparison of mean coherence scores with respect to the number of topics. This figure suggests selecting $K=8$ as the topic number to get the optimum human interpretability from the topic word distribution. 

\section{Extracting Local Facial Feature by NMF}
\label{face_image_nmf}

\subsection{Interpret the Decomposition of Face Image }

\begin{figure}[t]
    \centering
    \includegraphics[width=\columnwidth]{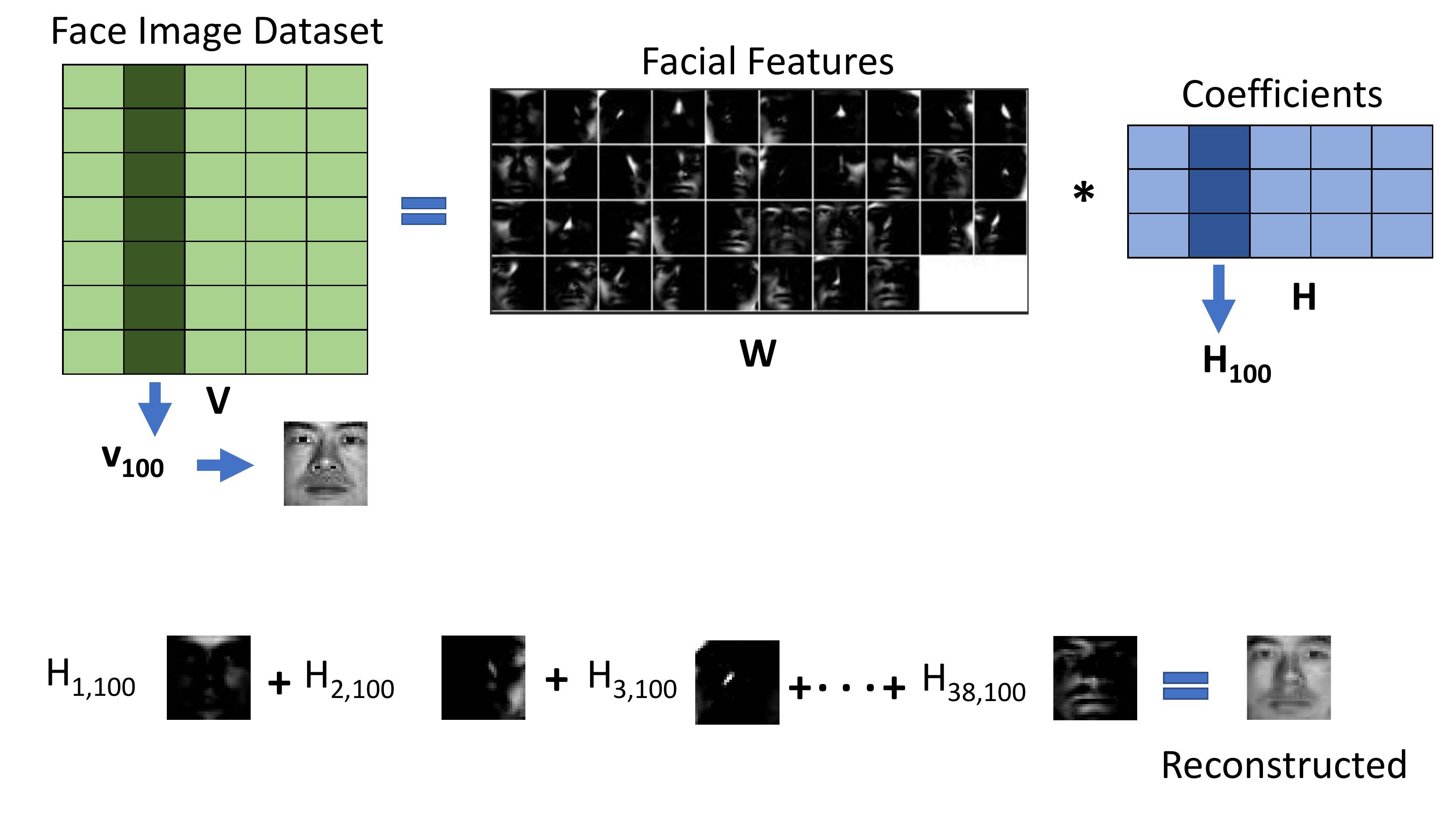}
    \caption{Face Image Decomposition}
    \label{fig:Face_image}
\end{figure}

The extraction of local facial features is one of the beautiful and practical applications of NMF. The basic idea behind this decomposition is to extract fundamental local facial features so that one can reconstruct any image of the data set using appropriate wight. To extract these fundamental features, one needs first to construct the data matrix $\matr{V}$ where each column of $\matr{V}$ represents the pixel information of the individual image. If we now apply the NMF algorithm on matrix $\matr{V}$, we generate the two matrices: matrix $\matr{W}$ stores the facial feature, and $\matr{H}$ stores the coefficient. Fig. \ref{fig:Face_image} shows the visual representation of the result.\\

If we want to reconstruct an image of data set: let’s example we want to reconstruct the $100^{th}$ column image in matrix $\matr{V}$. Then we will take all the facial features from matrix $\matr{W}$ and $100^th$ column vector from matrix $\matr{H}$ as coefficients. Then we will multiply the features with the coefficients and add them linearly. This will reconstruct the $100^th$ column image of matrix $\matr{V}$ with little loss.

\end{document}